\newtheorem{thm}{Theorem}
\newcommand*{\defeq}{\stackrel{\Delta}{=}}
\title{Scaling Equilibrium Propagation to Deep ConvNets \\
        by Drastically Reducing its Gradient Estimator Bias}
\author{
    {\bfseries Axel Laborieux $^{1, *}$, Maxence Ernoult$^{1,2, *}$,  Benjamin Scellier$^{3, \dagger}$,\vspace{0.2cm}} \\
    {\bfseries Yoshua Bengio$^{3,4}$, Julie Grollier$^2$, Damien Querlioz$^1$ \vspace{0.2cm}}\\
    $^1$Université Paris-Saclay, CNRS, C2N, 91120, Palaiseau, France \\
    $^2$Unité Mixte de Physique, CNRS, Thales, Université Paris-Saclay\\
    $^3$Mila, Université de Montréal\\
    $^4$Canadian Institute for Advanced Research\\
    ${}^\dagger$ Currently at Google\\
    \textsuperscript{*} Corresponding authors: \texttt{\{axel.laborieux, maxence.ernoult\}@c2n.upsaclay.fr}
}
\begin{document}

\maketitle

\begin{abstract}
 Equilibrium Propagation (EP) is a biologically-inspired counterpart of Backpropagation Through Time (BPTT) which, owing to its strong theoretical guarantees and the locality in space of its learning rule, fosters the design of energy-efficient hardware dedicated to learning.
 In practice, however, EP does not scale to visual tasks harder than MNIST.
 In this work, we show that a bias in the gradient estimate of EP, inherent in the use of finite nudging, is responsible for this phenomenon and that cancelling it allows training deep ConvNets by EP, including architectures with distinct forward and backward connections.
 These results highlight EP as a scalable approach to compute error gradients in deep neural networks, thereby motivating its hardware implementation. 
\end{abstract}

\subsection*{Introduction}
\label{sec:intro}
\noindent How synapses in hierarchical neural circuits are adjusted throughout learning a task remains a challenging question called the credit assignment problem \citep{richards2019deep}. 
Equilibrium Propagation (EP) \citep{scellier2017equilibrium} provides a biologically plausible solution to this problem in artificial neural networks.
EP is an algorithm for convergent RNNs which, by definition, are given a static input and whose recurrent dynamics converge to a steady state corresponding to the prediction of the network. 
EP proceeds in two phases which both involve the same dynamics. First, the network relaxes to a steady state, then the output layer is nudged towards a ground-truth target until a second steady state is reached. 
During the second phase, the perturbation at the output propagates to upstream layers, creating local error signals that match exactly those computed by Backpropagation Through Time (BPTT) \citep{ernoult2019updates}.
The spatial locality of the learning rule prescribed by EP is highly attractive for designing energy-efficient ``neuromorphic'' hardware implementations of gradient-based learning algorithms.
Deep learning models consume orders of magnitude more energy than the brain to learn cognitive  owing to the physical separation of memory and computation on conventional hardware
%due to 
%the massive back-and-forth data transfer between memory and computing required by backpropagation
\citep{strubell2019energy, markovic2020physics}. 
On (non Von Neumann) architectures where the memory is brought at the location of  computation, the implementation of EP could potentially be more energy efficient than the one of Backpropagation on GPUs by at least two order of magnitudes \citep{ambrogio2018equivalent}.
%The local nature of EP errors signals can allow to bring memory at the location of computation, following the principles of neuromorphic computing  \citep{ambrogio2018equivalent}, to decrease the energy consumption during training by at least two orders of magnitude. 
%This technological endeavor calls for learning algorithms like EP which involve only one kind of computation and learn thanks to spatially local variables.
%This is not the case of backpropagation (BP), which requires different computations during inference error propagation. 

%Owing to this strong theoretical guarantee, EP can provide leads for understanding biological learning \citep{lillicrap2020backpropagation}. 

However, previous works on EP \citep{scellier2017equilibrium,o2018initialized,o2019training,ernoult2019updates,ernoult2020equilibrium} limited their experiments to the MNIST classification task and to shallow network architectures.
Despite the theoretical guarantees of EP, the literature suggests that no implementation of EP has thus far succeeded to match the performance of standard deep learning approaches to train deep networks on challenging visual tasks.
In this work, we show that performing the gradient computation phase of EP with nudging strength of constant sign induces a systematic first order bias in the EP gradient estimate which, once cancelled, unlocks the training of deep ConvNets. 
By introducing a new method to estimate the gradient of the loss based on three steady states instead of two, we achieve 11.68\% test error on CIFAR-10, with only 0.6 \% accuracy degradation with respect to BPTT. Standard EP with two steady states yields 86.64\% test error.
We also propose to implement the neural network predictor as an external softmax readout, subsequently allowing us to use the cross-entropy loss, contrary to previous approaches using the squared error loss. Finally, based on ideas of \citet{scellier2018generalization} and \citet{kolen1994backpropagation}, we adapt the learning rule of EP for architectures with distinct forward and backward connections, yielding only 1.5\% accuracy degradation on CIFAR-10 compared to bidirectional connections.

\begin{figure}[ht!]
  \centering
  \includegraphics[width=0.91\textwidth]{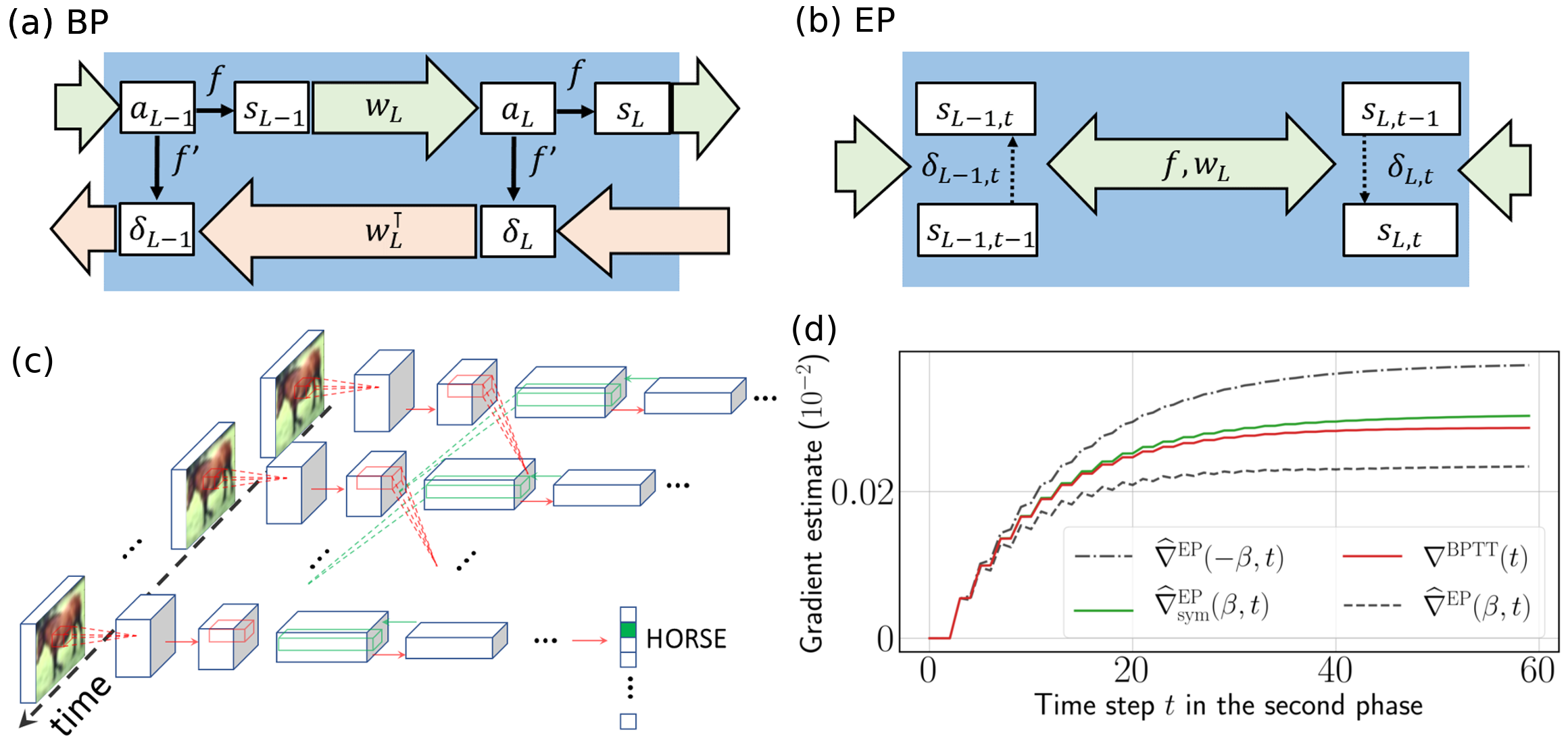}
  \caption{(a), (b) Comparison between BP and EP in terms of computation. $a_L$, $s_L$, $\delta_L$ denote respectively the pre-activation, the activation and the error. (c) Schematic of the recurrent convolutional architecture. (d) Comparison of the former ($\hat{\nabla}^{\rm EP}$) and of the new ($\hat{\nabla}^{\rm EP}_{\rm sym}$) EP estimates with the BPTT gradient ($\hat{\nabla}^{\rm BPTT}$) as a function of time during the second phase (more instances in Appendix~\ref{sec:appCompEstimate}).}
  \label{fig:architecture}
\end{figure}

\subsection*{Background}
{\bfseries Learning in hardware.} 
The challenge of implementing BP on a neuromorphic system for training is due to its distinct modes of operation during forward and backward passes (Fig.~\ref{fig:architecture}(a)).
During the forward pass, neural activations as well as their point derivatives 
%of activation functions, along with , 
have to be computed and stored  for the backward pass \citep{fumarola2016accelerating}.
Error signals computed with BP during training 
also require high precision computation, which does not allow taking advantage of neuromorphic hardware optimally
%are also highly sensitive to device imperfections as they are not of the same scale as neural activations, thus leading to accuracy degradation 
\citep{narayanan2017toward}. 
Conversely, EP (Fig.~\ref{fig:architecture}(b)) uses the same mode of operation for the two phases: the neural activations evolve along recurrent dynamics through a unique bidirectional pathway.
The error signal in the second phase is thus implicitly propagated to upstream layers through the relaxation of neural activations. 
%whereas, it has to be explicitly computed and propagated in the case of BP.
Owing to these compelling assets, hardware designers are proposing novel circuits for implementing energy-efficient on-chip learning with EP \citep{zoppo2020equilibrium, kendall2020training, ji2020towards, foroushani2020analog}.

{\bfseries Equilibrium Propagation (EP) \citep{scellier2017equilibrium}.} 
We consider the setting of supervised learning where $x$ is the static input and $y$ is the target. 
%The state $s$ of the neurons converges to a steady state $s_*$. 
The goal of learning is to optimize the network parameters $\theta$ to minimize the loss $\mathcal{L}^{*} = \ell(s_*, y)$ 
%where $\ell$ is a differentiable cost function 
(Appendix~\ref{sec:appCode}).
Denoting $s_t$ the state of the neurons at time step $t$, during the first (``free'') phase, the RNN evolves according to the following dynamics:
%for $T$ steps:
\begin{equation}
\label{eq:dynamics}
    s_{t+1} = \frac{\partial \Phi}{\partial s}(x, s_{t}, \theta) ,% \rightsquigarrow s_*:\quad s_*\defeq \frac{\partial \Phi}{\partial s}(x, s_*, \theta),
\end{equation}
where $\Phi$ denotes the scalar primitive of the dynamics, and which converges to a steady-state $s_*$.
These discrete-time equations, introduced in \citep{ernoult2019updates}, are a simplification of the original version of EP for real-time dynamical systems \citep{scellier2017equilibrium}, where the dynamics derive from an ``energy function'' $E$, which plays a similar role to the primitive function $\Phi$.
During the second (``nudged'') phase of EP, the system evolves with an additional nudging term 
%$-\beta \frac{\partial \ell}{\partial s}$ 
added to the dynamics, $\beta$ being a small scalar factor:
%, until a second steady state $s_*^\beta$ is reached.  
%Defining $s^\beta_0 = s_*$, the second phase dynamics:
\begin{equation}
\label{eq:secondphase}
s_{t+1}^{\beta} = \frac{\partial \Phi}{\partial s}(x, s_{t}^{\beta}, \theta) - \beta \frac{\partial \ell}{ \partial s}(s_{t}^{\beta}, y),
\end{equation}
and converges to a steady-state $s_*^\beta$. 
%(by definition, $s^\beta_0 = s_*$).  
%The subsequent estimate of $-\frac{\partial \mathcal{L}_*}{\partial \theta}$, i.e. 
The gradient estimate of EP then reads
\begin{equation}
\label{eq:secondphasebis}
%\rightsquigarrow s_*^\beta, 
\quad \widehat{\nabla}^{\rm{EP}}(\beta) \defeq
\frac{1}{\beta} \left( \frac{\partial \Phi}{\partial \theta}(x, s_{*}^{\beta}, \theta) - \frac{\partial \Phi}{\partial \theta}(x, s_{*}, \theta) \right).
\end{equation}
A theorem derived by \citet{ernoult2019updates}, inspired from \citet{scellier2019equivalence}, shows that, provided convergence in the first phase has been reached, these gradient estimates given by EP truncated at any time step $t$ of the second phase match, in the limit $\beta \to 0$, those computed by BPTT truncated at time step $T - t$ (Appendix~\ref{sec:BPTT}). 

{\bfseries Convolutional Architectures for Convergent RNNs.} A convolutional architecture for convergent RNNs 
with static input 
(Fig.~\ref{fig:architecture}(c)) was introduced by \citet{ernoult2019updates} and successfully trained with EP on the MNIST dataset, with 
%performance almost matching 
accuracy approaching the one of
BPTT. Denoting $w_n$ a convolutional or fully connected layer, $\star$ and $\mathcal{P}$ convolution and pooling respectively (with $\tilde{w}$, $\mathcal{P}^{-1}$ denoting their respective inverses) and $\sigma$ an activation function, the dynamics read (details in Appendix~\ref{sec:appConvArch}):
\begin{align}
\label{eq:conv-dynamics}
\left\{
\begin{array}{ll}
s^{n}_{t+1} &= \sigma \left( \mathcal{P}\left(w_{n-1}\star s^{n-1}_t\right) + \tilde{w}_{n+1}\star \mathcal{P}^{-1}\left(s^{n+1}_{t}\right)\right), \qquad \mbox{for convolutional layers,} \\
s^n_{t + 1} &= \sigma \left(w_{n-1}\cdot s^{n-1}_{t} + w^{\top}_{n +1}\cdot s^{n+1}_{t} \right), \qquad \mbox{for fully connected layers.}
\end{array} 
\right.
\end{align}
{\bfseries Equilibrium Propagation with unidirectional synaptic connections.} Previous works have proposed a more general formulation of EP, called \emph{Vector Field} (VF), where the dynamics need not derive from a primitive scalar function.
This allows training networks with distinct forward and backward connections \citep{scellier2018generalization,ernoult2020equilibrium} (see Appendix \ref{sec:appConvAsym} for an explicit expression of the VF gradient estimate), further enhancing the biological plausibility of EP.

\subsection*{Contributions of this work : Scaling EP Training}
{\bfseries Reducing bias and variance in the gradient estimate of the loss function.}  \citet{scellier2017equilibrium} motivate their two-phases procedure with $\beta = 0$ and $\beta > 0$ by using $\hat{\nabla}^{\rm EP}$ (Eq.~\ref{eq:secondphasebis}) to compute an estimate of the derivative $\left. \frac{d}{d\beta} \right|_{\beta=0} \frac{\partial \Phi}{\partial \theta}(x, s_{*}^{\beta}, \theta)$ which they showed to be equal to $-\frac{\partial \mathcal{L}_*}{\partial \theta}$. However, the use of $\beta > 0$ induces a systematic first order bias in this estimation, which we propose to eliminate by performing a third phase with $-\beta$ as the nudging factor (proof in Appendix \ref{sec:DL}), keeping the first and second phases unchanged, along with the following symmetric difference estimate:
\begin{equation}
\label{eq:thirdphase}
\widehat{\nabla}^{\rm{EP}}_{\rm sym}(\beta) \defeq \frac{1}{2\beta} \left( \frac{\partial \Phi}{\partial \theta}(x, s_{*}^{\beta}, \theta) - \frac{\partial \Phi}{\partial \theta}(x, s_{*}^{-\beta}, \theta) \right).
\end{equation}
We call $\widehat{\nabla}^{\rm{EP}}(\beta)$ and $\widehat{\nabla}^{\rm{EP}}_{\rm sym}(\beta)$ the one-sided and symmetric EP gradient estimates respectively.
Note that the first-order term of $\widehat{\nabla}^{\rm{EP}}(\beta)$ could also be cancelled out on average, by choosing the sign of $\beta$ at random with even probability, which we call ``Random Sign''.
We  see qualitatively in Fig.~\ref{fig:architecture}(d) that  $\widehat{\nabla}^{\rm{EP}}_{\rm sym}(\beta)$ gives a better estimate of $\hat{\nabla}^{\rm BPTT}$ (see Appendix \ref{sec:BPTT} for a proof) than $\widehat{\nabla}^{\rm{EP}}(\beta)$.

{\bfseries Changing the loss function.} Previous implementations of EP used the squared error loss as, in this setting, the output neurons can be viewed as a part of the system which bidirectionally interacts with neighboring neurons and the nudging term as an elastic force. We propose here an alternative approach, where the output $\hat{y}_t$ is no longer a subset of neurons of the system but is instead implemented as an external read-out, consisting of the composition of a read-out weight matrix $w_{\rm out}$ 
%of size $\dim(y) \times \dim(s)$ 
with the softmax function: $\widehat{y}_t = \mbox{softmax}(w_{\rm out}\cdot s_t)$. This approach enables the use of a cross-entropy cost function (see Appendix~\ref{sec:readout} for implementation details).

{\bfseries Changing the learning rule of vector field EP.}
Until now, training experiments of unidirectional weights EP have performed worse than bidirectional weights EP \citep{ernoult2020equilibrium}. In this work, therefore, we tailor a new learning rule for unidirectional weights (described in  Appendix~\ref{sec:appConvAsym}), where the forward and backward weights undergo the same weight updates, incorporating an equal leakage term. This way, forward and backward weights, although they are independently initialized, naturally converge to identical values throughout the learning process, a technique which is adapted from \citet{kolen1994backpropagation} and helped  improve the performance of Feedback Alignment in Deep ConvNets \citep{akrout2019deep}. We adapt the three-phases procedure detailed above to this setting to compute the common update of forward and backward weights, thereby defining the gradient estimate $\hat{\nabla}^{\rm KP-VF}_{\rm sym}$. We also define $\hat{\nabla}^{\rm VF}_{\rm sym}$ the gradient estimate obtained by applying the three-phases procedure to the former VF approach of \cite{scellier2018generalization,ernoult2020equilibrium}.
\begin{table}[ht!] 
\caption{Accuracy comparison on CIFAR-10 between BPTT and EP with several gradient estimation schemes. Note that the different gradient estimates only apply to EP. We indicate over ﬁve trials the mean and standard deviation in parenthesis for the test error, and the mean train error.}
\label{tab:results}
\centering
\begin{tabular}{llllll}
\cline{1-6} \multirow{2}{*}{Loss Function} & \multicolumn{1}{c}{EP Gradient}
                               & \multicolumn{2}{c}{EP Error (\%)}                                                 & \multicolumn{2}{c}{BPTT Error (\%)}                                  \\ %\cline{2-6} 
\multicolumn{1}{c}{}           & \multicolumn{1}{c}{%Gradient
Estimate} & \multicolumn{1}{c}{Test}           & \multicolumn{1}{c}{Train} & \multicolumn{1}{c}{Test}                 & \multicolumn{1}{c}{Train} \\ \hline
\multirow{3}{*}{Squared Error} & 2-Phase / $\hat{\nabla}^{\rm EP}$                             & $86.64$ $(5.82)$                   & $84.90$                   & \multirow{3}{*}{$11.10$ $(0.21)$} & \multirow{3}{*}{$3.69$}   \\
                               &  Random Sign                           & $21.55$ $(20.00)$                  & $20.01$                   &                                          &                           \\
                               & 3-Phase / $\hat{\nabla}^{\rm EP}_{\rm sym}$                             & $12.45$  $(0.18)$                  & $7.83$                    &                                          &                           \\ \hline
Cross-Ent.                     & 3-Phase / $\hat{\nabla}^{\rm EP}_{\rm sym}$                           & $\mathbf{11.68}$ $\mathbf{(0.17)}$ & $\mathbf{4.98}$           & $11.12$ $(0.21)$                         & $2.19$                    \\
Cross-Ent. (Dropout)           & 3-Phase / $\hat{\nabla}^{\rm EP}_{\rm sym}$                            & $11.87$ $(0.29)$                   & $6.46$                    & $10.72$ $(0.06)$                         & $2.99$                    \\ \hline
\multirow{2}{*}{Cross-Ent.} & 3-Phase / $\widehat{\nabla}^{\rm{VF}}_{\rm sym}$ & $75.47$ $(4.72)$                   & $78.04$                   & \multirow{2}{*}{$9.46$ $(0.17)$} & \multirow{2}{*}{$0.80$}   \\
                               & 3-Phase / $\widehat{\nabla}^{\rm{KP-VF}}_{\rm sym}$   & $\mathbf{13.15}$ $\mathbf{(0.49)}$ & $8.87$                    &                                  &                           \\ \hline
\end{tabular}
\end{table}

{\bfseries Experimental results.} In Table~\ref{tab:results}, we compare the accuracy achieved by the ConvNet for each EP gradient estimate with the accuracy achieved by BPTT. 
On bidirectional weight architectures, the one-sided gradient estimate  $\hat{\nabla}^{\rm EP}$ leads to unstable training behavior where the network is unable to fit the data. 
When the bias in the gradient estimate is averaged out by choosing at random the sign of $\beta$, the average test error over five runs goes down to $21.55\%$. 
However, one run among the five yielded instability similar to the one-sided estimate, whereas the four remaining runs lead to $12.61\%$ test error and $8.64 \%$ train error. 
This method for estimating the loss gradient thus presents high variance, further experiments in Appendix \ref{sec:appExpDetail} confirm this trend. 
Conversely, the symmetric estimate $\hat{\nabla}^{\rm EP}_{\rm sym}$ enables EP to consistently reach $12.45\%$ test error, with only $1.35\%$ degradation with respect to BPTT. 
Therefore, removing the first-order error term in the gradient estimate is critical for scaling to deeper architectures. 
However, proceeding to this end deterministically (with three phases) rather than stochastically (random sign) seems to be more reliable. 

The new readout scheme introduced to optimize the cross-entropy loss function enables EP to narrow the performance gap with BPTT down to $0.56\%$ while outperforming the Squared Error setting by $0.77\%$. 
Finally, we adapted dropout \citep{srivastava2014dropout} to convergent RNNs (see Appendix~\ref{sec:dropout} for implementation details) to see if the performance could be improved further. However, we can observe in Table~\ref{tab:results} that contrary to BPTT, the EP test error is not improved by adding 
%a $0.1$ dropout probability in the neuron layer after the convolutions, 
dropout,
which we hypothesize is due to the residual estimation bias of the BPTT gradients by EP.

%We investigate the performance of EP when the architecture uses distinct forward and backward weights, using a softmax readout.
In the situation when the architecture uses distinct forward and backward weights,
we find that the traditional estimate $\widehat{\nabla}^{\rm{VF}}_{\rm sym}(\beta)$ leads to a poor accuracy with $75.47\%$ test-error and  simultaneously observe that forward and backward weights do not align well. 
Conversely, when using our new estimate $\widehat{\nabla}^{\rm{KP-VF}}_{\rm sym}(\beta)$, a good accuracy is recovered with $1.5 \%$  degradation with respect to the architecture with bidirectional connections, and a $3\%$ degradation with respect to BPTT. 
In this case, forward and backward weights are aligned by epoch 50, as observed in 
%the weight alignment curves in 
Appendix~\ref{sec:appAngle}. 
These results suggest that enhancing forward and backward weights alignment also helps EP training in deep ConvNets. 

{\bfseries Discussion.} In comparison with conventional implementations of EP, our results unveil the necessity to compute better gradient estimates in order to scale EP to deep ConvNets on hard visual tasks. 
Keeping the first order bias in the gradient estimate of EP, as done traditionally, severely impedes the training of these architectures and, conversely, removing it brings EP accuracy on CIFAR-10 close to the one achieved by BPTT.
Employing a new training technique that preserves the spatial locality of EP computations,  our results extend to architectures with distinct forward and backward synaptic connections. 
We  only observe a $1.5\%$ performance degradation with respect to the bidirectional architecture. 
Our three steady states-based gradient estimate comes at a computational cost since one more phase is needed with regards to the conventional implementation. 
In the longer run, the full potential of EP will be best envisioned on neuromorphic hardware, which can sustain fast analog device physics and use them to implement the dynamics of EP intrinsically \citep{romera2018vowel, ambrogio2018equivalent}.
Our prescription to run two nudging phases with opposite nudging strengths could be naturally implemented on such systems, which often function differentially to cancel device inherent biases \citep{bocquet2018memory}.

\section*{Broader Impact}
This work may have a long-term impact on the design of energy-efficient hardware leveraging the physics of the device to perform learning. The demonstration that EP can scale to deep networks may also provide insights to neuroscientists to understand the mechanisms of credit assignment in the brain.
Due to the long term nature of this impact, the positive and negative outcomes of this work cannot yet be stated.

\bibliographystyle{abbrvnat}
%\bibliography{biblio}

\newpage
\appendix
\part*{Appendix}

\section{Pseudo code}
\label{sec:appCode}

\subsection{Random one-sided estimation of the loss gradient}
\label{sec:appendOneSided}

In this appendix, we define the random one-sided estimation used in this work and by \citet{scellier2017equilibrium,ernoult2020equilibrium}.

\begin{algorithm}[H]{\emph{Input}: $x$, $y$, $\theta$, $\eta$.  \\
\emph{Output}: $\theta$.}
    \caption{EP with random one-sided estimation of the loss gradient. We omit the activation function $\sigma$ for clarity.}\label{alg:rndsign}
    \begin{algorithmic}[1]
        \State $s_0 \gets 0$
        \For{$t=0$ to $T$} \Comment{First phase.}
        \State $s_{t+1} \gets \frac{\partial \Phi}{\partial s} (x, s_t, \theta)$
        \EndFor
        \State $s_* \gets s_T$
        \State $\beta \gets \beta \times  {\rm{Bernoulli}(1, -1)}$ \Comment{Random sign.}
        \State $s_{0}^{\beta} \gets s_*$
        \For{$t=0$ to $K$} \Comment{Second phase.}
        \State $s_{t+1}^{\beta} \gets \frac{\partial \Phi}{\partial s} (x, s_t^\beta, \theta) - \beta \frac{\partial \ell}{\partial s}(s_{t}^{\beta}, y)$
        \EndFor
        \State $s_{*}^{\beta} \gets s_{K}^{\beta}$
        \State $\nabla_{\theta}^{\rm{EP}} \gets \frac{1}{\beta} \left( \frac{\partial \Phi}{\partial \theta}(s_{*}^{\beta}, \theta) - \frac{\partial \Phi}{\partial \theta}(s_{*}, \theta) \right)$
        \State $\theta \gets \theta + \eta \nabla_{\theta}^{\rm{EP}}$ \\
        \Return $\theta$
    \end{algorithmic}
\end{algorithm}

\subsection{Symmetric difference estimation of the loss gradient}
\label{sec:appendthirdphase}

In this appendix, we define the estimation procedure using a symmetric difference estimate introduced in this work.

\begin{algorithm}[H]{\emph{Input}: $x$, $y$, $\theta$, $\eta$.  \\
\emph{Output}: $\theta$.}
    \caption{EP with symmetric difference estimation of the loss gradient. We omit the activation function $\sigma$ for clarity.}\label{alg:thirdphase}
        \begin{algorithmic}[1]
        \State $s_0 \gets 0$
        \For{$t=0$ to $T$} 
        \State $s_{t+1} \gets \frac{\partial \Phi}{\partial s} (x, s_t, \theta)$ \Comment{First phase.}
        \EndFor
        \State $s_* \gets s_T$ \Comment{Store the free steady state.}
        \State $s_{0}^{\beta} \gets s_*$ 
        \For{$t=0$ to $K$} 
        \State $s_{t+1}^{\beta} \gets \frac{\partial \Phi}{\partial s} (x, s_{t}^{\beta}, \theta) - \beta \frac{\partial \ell}{\partial s}(s_{t}^{\beta}, y)$ \Comment{Second phase.}
        \EndFor
        \State $s_{*}^{\beta} \gets s_{K}^{\beta}$
        \State $s_{0}^{-\beta} \gets s_*$ \Comment{Back to the free steady state.}
        \For{$t=0$ to $K$}
        \State $s_{t+1}^{-\beta} \gets \frac{\partial \Phi}{\partial s} (x, s_{t}^{-\beta}, \theta) + \beta \frac{\partial \ell}{\partial s}(s_{t}^{-\beta}, y)$ \Comment{Third phase.}
        \EndFor
        \State $s_{*}^{-\beta} \gets s_{K}^{-\beta}$
        \State $\widehat{\nabla}_{\theta}^{\rm{EP}} \gets \frac{1}{2\beta} \left( \frac{\partial \Phi}{\partial \theta}(s_{*}^{\beta}, \theta) - \frac{\partial \Phi}{\partial \theta}(s_{*}^{-\beta}, \theta) \right)$
        \State $\theta \gets \theta + \eta \widehat{\nabla}_{\theta}^{\rm{EP}}$ \\
        \Return $\theta$
    \end{algorithmic}
\end{algorithm}

\section{Training RNNs with BPTT}
\label{sec:BPTT}

The convergent RNNs considered by EP can also be trained by Backpropagation Through Time (BPTT). In this context, BPTT consists in performing the first phase for $T$ time steps until the network reaches the steady state $s_T = s_*$, computing the loss at the final time step and subsequently backpropagating the gradients through the computational graph of the first phase. 

We write $\nabla^{\rm{BPTT}}(t)$ the gradient computed by BPTT truncated to the last $t$ time steps ($T-t, \ldots, T$). To derive it in function of loss gradients, let us rewrite Eq.~(\ref{eq:dynamics}) as $s_{t+1} = \frac{\partial \Phi}{\partial s}(x, s_{t}, \theta_t = \theta)$, where $\theta_t$ denotes the parameter at time step $t$, the value $\theta$ being shared across all time steps. We consider the loss after $T$ time steps $\mathcal{L} = \ell(s_T,y)$. Rewriting the dynamics in such a way enables us to define $\frac{\partial \mathcal{L}}{\partial \theta_t}$ as the sensitivity of the loss with respect to $\theta_t$, when $\theta_0, \ldots, \theta_{t-1}, \theta_{t+1}, \ldots, \theta_{T-1}$ remain fixed (set to the value $\theta$). With these notations, the gradient computed by BPTT truncated to the last $t$ time steps is
\begin{equation}
    \nabla^{\rm{BPTT}}(t) = \frac{\partial \mathcal{L}}{\partial \theta_{T-t}} + \ldots + \frac{\partial \mathcal{L}}{\partial \theta_{T-1}}.
\end{equation}

\section{Error terms in the estimates of the loss gradient}
\label{sec:DL}

In this appendix, we prove Lemma \ref{lma:lemma} which shows that $\widehat{\nabla}^{\rm{EP}}_{\rm sym}(\beta)$ is a better estimate of $- \frac{\partial \mathcal{L}^{*}}{\partial \theta}$ than $\widehat{\nabla}^{\rm{EP}}(\beta)$. First, we recall the theorem proved in \citet{scellier2017equilibrium}.
\begin{thm}[\citet{scellier2017equilibrium}]
\begin{equation}
\left. \frac{d}{d\beta} \right|_{\beta=0} \frac{\partial \Phi}{\partial \theta}(x, s_{*}^{\beta}, \theta) = - \frac{\partial \mathcal{L}^{*}}{\partial \theta}.
\end{equation}
\label{thm:main}
\end{thm}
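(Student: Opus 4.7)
The plan is to differentiate both sides of the claim at $\beta = 0$ using implicit differentiation of the fixed-point equations defining the steady states, and then reconcile the two resulting expressions using the symmetry of the second partial derivatives of the scalar primitive $\Phi$. The nudged steady state $s_*^\beta$ satisfies
$$s_*^\beta = \frac{\partial \Phi}{\partial s}(x, s_*^\beta, \theta) - \beta \frac{\partial \ell}{\partial s}(s_*^\beta, y),$$
with $s_*^0 = s_*$. Writing $H \defeq \frac{\partial^2 \Phi}{\partial s^2}(x, s_*, \theta)$ and $g \defeq \frac{\partial \ell}{\partial s}(s_*, y)$, I would apply the implicit function theorem (valid provided $I - H$ is invertible, which is precisely the condition making $s_*$ an isolated stable fixed point of the discrete-time dynamics) to obtain $\left.\tfrac{d s_*^\beta}{d \beta}\right|_{\beta = 0} = -(I - H)^{-1} g$. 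Applying the chain rule to $\frac{\partial \Phi}{\partial \theta}(x, s_*^\beta, \theta)$ and setting $M \defeq \frac{\partial^2 \Phi}{\partial \theta \, \partial s}(x, s_*, \theta)$ then gives
$$\left.\frac{d}{d\beta}\right|_{\beta = 0} \frac{\partial \Phi}{\partial \theta}(x, s_*^\beta, \theta) = -M (I - H)^{-1} g.$$

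On the other side, the free steady state $s_*$ itself depends on $\theta$ through the fixed-point equation $s_* = \frac{\partial \Phi}{\partial s}(x, s_*, \theta)$. Implicit differentiation in $\theta$ yields $\tfrac{d s_*}{d \theta} = (I - H)^{-1} \tfrac{\partial^2 \Phi}{\partial s \, \partial \theta}(x, s_*, \theta)$. Since $\mathcal{L}^* = \ell(s_*, y)$ depends on $\theta$ only through $s_*$, a second application of the chain rule produces
$$\frac{\partial \mathcal{L}^*}{\partial \theta} = g^\top (I - H)^{-1} \frac{\partial^2 \Phi}{\partial s \, \partial \theta}(x, s_*, \theta).$$

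The two expressions are then reconciled by symmetry. Because $\Phi$ is a scalar potential, Clairaut's theorem gives $\frac{\partial^2 \Phi}{\partial s \, \partial \theta} = M^\top$, and the Hessian $H$ is symmetric, so $(I - H)^{-1}$ is self-adjoint. Transposing the expression for $\frac{\partial \mathcal{L}^*}{\partial \theta}$ therefore yields $M (I - H)^{-1} g$, which is exactly the negative of the expression obtained above, proving the theorem. The main obstacle — or rather, the conceptual crux — is precisely this final symmetry step: the entire argument hinges on $\Phi$ being a genuine scalar primitive, so that mixed partials commute and $I - H$ is self-adjoint. Without this structure, as in the Vector Field generalization of EP, the transposition identity fails and the EP gradient estimate no longer matches $-\partial_\theta \mathcal{L}^*$, which is exactly the obstruction the paper addresses by introducing the tailored KP-VF learning rule in the distinct forward/backward weights setting.
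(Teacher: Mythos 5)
The paper itself contains no proof of Theorem~\ref{thm:main}: it is recalled from \citet{scellier2017equilibrium} and used purely as a black box in the proof of Lemma~\ref{lma:lemma}, so there is no internal argument to compare yours against. Your derivation is nonetheless correct and is essentially the canonical argument from the cited source, transposed to the discrete-time primitive-function setting of \citet{ernoult2019updates}: implicit differentiation of the nudged fixed-point equation gives $\left.\tfrac{d s_*^{\beta}}{d\beta}\right|_{\beta=0} = -(I-H)^{-1}g$ with your $H = \tfrac{\partial^2 \Phi}{\partial s^2}(x,s_*,\theta)$ and $g = \tfrac{\partial \ell}{\partial s}(s_*,y)$; implicit differentiation of the free fixed point in $\theta$ gives $\tfrac{ds_*}{d\theta} = (I-H)^{-1}M^{\top}$ with $M = \tfrac{\partial^2\Phi}{\partial\theta\,\partial s}$; and the two sides are matched by transposition, using symmetry of $H$ and commutation of the mixed partials of the scalar primitive $\Phi$. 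Your closing remark correctly isolates the conceptual crux: the transposition step is exactly what fails for general vector-field dynamics, which is why the paper resorts to $\widehat{\nabla}^{\rm{KP-VF}}_{\rm sym}$ in the distinct forward/backward-weight setting. Two small points of hygiene. First, invertibility of $I-H$ is implied by, but strictly weaker than, stability of $s_*$ under Eq.~(\ref{eq:dynamics}) (stability needs the spectral radius of $H$ below one), so ``precisely the condition making $s_*$ an isolated stable fixed point'' overstates the equivalence; it is cleaner to assume stability and deduce invertibility. Second, you should state explicitly that $\ell$ has no direct dependence on $\theta$, so that $\mathcal{L}^* = \ell(s_*,y)$ varies with $\theta$ only through $s_*$; in the softmax-readout variant this is not quite true, since $w_{\rm out}$ enters $\ell$ directly and receives its own separate update in Appendix~\ref{sec:appConvSym-CE}.
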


We also recall that the two estimates (one-sided and symmetric) are, by definition:
\begin{align*}
\widehat{\nabla}^{\rm{EP}}(\beta) & \defeq \frac{1}{\beta} \left( \frac{\partial \Phi}{\partial \theta}(x, s_{*}^{\beta}, \theta) - \frac{\partial \Phi}{\partial \theta}(x, s_{*}, \theta) \right), \\
\widehat{\nabla}^{\rm{EP}}_{\rm sym}(\beta) & \defeq \frac{1}{2\beta} \left( \frac{\partial \Phi}{\partial \theta}(x, s_{*}^{\beta}, \theta) - \frac{\partial \Phi}{\partial \theta}(x, s_{*}^{-\beta}, \theta) \right).
\end{align*}

Finally we recall Lemma \ref{lma:lemma}, for readability.

\begin{restatable}{lma}{estimates}
\label{lma:lemma}
Provided the function $\beta \mapsto \frac{\partial \Phi}{\partial \theta}(x,s_{*}^{\beta}, \theta)$ is three times differentiable, we have, as $\beta \to 0$:
\begin{align*}
\widehat{\nabla}^{\rm{EP}}(\beta) &= - \frac{\partial \mathcal{L}^{*}}{\partial \theta} + \frac{\beta}{2} \left. \frac{d^2}{d\beta^2} \right|_{\beta=0} \frac{\partial \Phi}{\partial \theta}(s_{*}^{\beta}, \theta) + O(\beta^2),\\
\widehat{\nabla}^{\rm{EP}}_{\rm sym}(\beta) &= - \frac{\partial \mathcal{L}^{*}}{\partial \theta} + O(\beta^2).
\end{align*}
\end{restatable}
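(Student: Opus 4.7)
The plan is to reduce both expansions to a single Taylor series in $\beta$ of the vector-valued function
\[
F(\beta) \defeq \frac{\partial \Phi}{\partial \theta}(x, s_*^\beta, \theta),
\]
and then plug in the identification of $F'(0)$ provided by Theorem \ref{thm:main}. The three-times-differentiability hypothesis is there precisely to make a second-order Taylor expansion with an $O(\beta^3)$ remainder legitimate, so no further regularity work is needed.

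First I would write out
\[
F(\beta) = F(0) + \beta\, F'(0) + \tfrac{\beta^2}{2}\, F''(0) + O(\beta^3), \qquad
F(-\beta) = F(0) - \beta\, F'(0) + \tfrac{\beta^2}{2}\, F''(0) + O(\beta^3),
\]
using $s_*^0 = s_*$ so that $F(0) = \frac{\partial \Phi}{\partial \theta}(x, s_*, \theta)$. Substituting the first expansion into the definition of $\widehat{\nabla}^{\rm EP}(\beta) = \frac{1}{\beta}(F(\beta) - F(0))$ gives
\[
\widehat{\nabla}^{\rm EP}(\beta) = F'(0) + \tfrac{\beta}{2}\, F''(0) + O(\beta^2),
\]
while subtracting the two expansions and dividing by $2\beta$ cancels the constant and the quadratic term, yielding
\[
\widehat{\nabla}^{\rm EP}_{\rm sym}(\beta) = \frac{F(\beta) - F(-\beta)}{2\beta} = F'(0) + O(\beta^2).
\]

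It then remains to invoke Theorem \ref{thm:main}, which identifies $F'(0) = \left.\tfrac{d}{d\beta}\right|_{\beta=0} \frac{\partial \Phi}{\partial \theta}(x, s_*^\beta, \theta) = -\frac{\partial \mathcal{L}^{*}}{\partial \theta}$. Substituting this value into both displayed lines gives the two claims of the lemma, with the explicit first-order bias term $\frac{\beta}{2} F''(0)$ appearing in the one-sided estimate and cancelled in the symmetric one by construction.

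I expect no real obstacle: the statement is essentially the textbook observation that a symmetric (central) finite difference kills the leading-order bias of a forward finite difference, combined with the already-proven derivative formula of Scellier and Bengio. The only mild subtlety is bookkeeping — keeping the remainder at order $O(\beta^3)$ in $F$ so that after division by $\beta$ (resp.\ $2\beta$) it becomes $O(\beta^2)$ — which is precisely what the three-times-differentiability hypothesis grants.
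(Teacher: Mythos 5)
Your proposal is correct and follows exactly the same route as the paper's proof: define $f(\beta)=\frac{\partial \Phi}{\partial \theta}(x,s_*^\beta,\theta)$, Taylor-expand at $\pm\beta$ with an $O(\beta^3)$ remainder (justified by the three-times-differentiability hypothesis), and substitute $f'(0)=-\frac{\partial \mathcal{L}^*}{\partial \theta}$ from Theorem~\ref{thm:main}. Nothing is missing.
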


\begin{proof}[Proof of Lemma \ref{lma:lemma}]
Let us define
\begin{equation*}
    f(\beta) \defeq \frac{\partial \Phi}{\partial \theta}(x,s_{*}^{\beta}, \theta).
\end{equation*}
The formula of Theorem \ref{thm:main} rewrites
\begin{equation*}
    f'(0) = - \frac{\partial \mathcal{L}^{*}}{\partial \theta}.
\end{equation*}
As $\beta \to 0$, we have the Taylor expansion
\begin{equation}
\label{eq:taylor:pos}
f(\beta) = f(0) + \beta f'(0) + \frac{\beta^2}{2} f''(0) +  O(\beta^3).
\end{equation}
With these notations, the one-sided estimate reads
\begin{align*}
\widehat{\nabla}^{\rm{EP}}(\beta) & = \frac{1}{\beta} \left( f(\beta) - f(0) \right) \\
& = f'(0) + \frac{\beta}{2} f''(0) + O(\beta^2) \\
& = - \frac{\partial \mathcal{L}^{*}}{\partial \theta} + \frac{\beta}{2} \left. \frac{d^2}{d\beta^2} \right|_{\beta=0} \frac{\partial \Phi}{\partial \theta}(x, s_{*}^{\beta}, \theta)  + O(\beta^2).
\end{align*}

We can also write a Taylor expansion around $0$ at the point $-\beta$. We have
\begin{equation}
\label{eq:taylor:neg}
f(-\beta) = f(0) - \beta f'(0) + \frac{\beta^2}{2} f''(0) +  O(\beta^3).
\end{equation}
Subtracting Eq.~\ref{eq:taylor:neg} from Eq.~\ref{eq:taylor:pos}, we can rewrite the symmetric difference estimate as
\begin{align*}
\widehat{\nabla}^{\rm{EP}}_{\rm sym}(\beta) & = \frac{1}{2 \beta} \left( f(\beta) - f(-\beta) \right) \\
& = f'(0) +O(\beta^2) \\
& = - \frac{\partial \mathcal{L}^{*}}{\partial \theta} + O(\beta^2).
\end{align*}
The derivative to the third order of $f$ is only used to get the $O(\beta^3)$ term in the expansion Eq.~(\ref{eq:taylor:pos}), it can be changed into $o(\beta^2)$ if we only assume $f$ twice differentiable.
\end{proof}

\section{Changing the loss function}
\label{sec:readout}

We introduce a novel architecture to optimize the cross-entropy loss with EP, narrowing the gap with conventional deep learning architectures for classification tasks. In the next paragraph, we denote $\widehat{y}$ the set of neurons that carries out the prediction of the neural network.

\begin{figure}[ht!]
  \centering
  \includegraphics[width=0.85\textwidth]{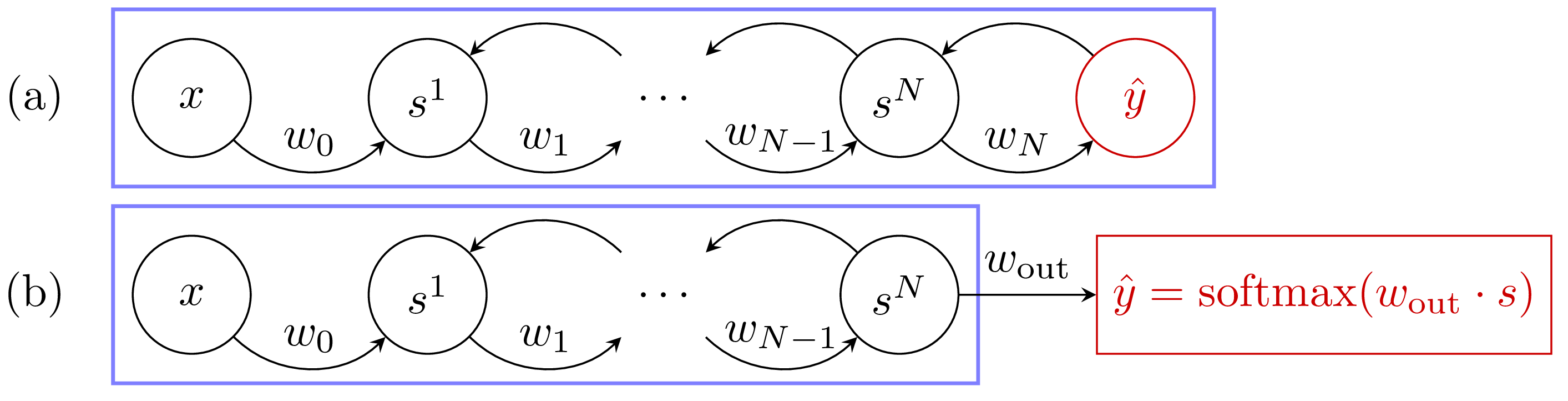}
  \caption{Free dynamics of the architectures used for the two loss functions where the blue frame delimits the system. \textbf{(a) Squared Error loss function.} The usual setting where the predictor $\hat{y}$ (in red) takes part in the free dynamics of the neural network through bidirectional synaptic connections. \textbf{(b)~Cross-Entropy loss function.} The new approach proposed in this work where the predictor $\hat{y}$ (also in red) is no longer involved in the system free dynamics and is implemented as a softmax readout.  
  }
  \label{fig:archi}
\end{figure}

\paragraph{Squared Error loss function.}
Previous implementations of EP used the squared error loss. Using this loss function for EP is natural, as in this setting, the output $\widehat{y}$ is viewed as a part of $s$ (the state variable of the network), which can influence the state of the network through bidirectional synaptic connections (see Fig.~\ref{fig:archi}). The state of the network is of the form $s=(s^1, \dots, s^N,\widehat{y})$ where $h=(s^1, \dots, s^N)$ represent the ``hidden layers'', and the corresponding cost function is
\begin{equation}
    \ell(\widehat{y},y) = \frac{1}{2} \left\| \widehat{y} - y \right\|^2.
\end{equation}

The second phase dynamics of the hidden state and output layer given by Eq.~(\ref{eq:secondphase}) read, in this context:
\begin{equation}
    \label{eq:elastic}
    h_{t+1}^{\beta} = \frac{\partial \Phi}{\partial h}(x, h_t^{\beta}, \widehat{y}_t^{\beta}, \theta), \qquad \widehat{y}_{t+1}^{\beta} = \frac{\partial \Phi}{\partial \widehat{y}}(x, h_t^{\beta}, \widehat{y}_t^{\beta}, \theta) + \beta \; (y - \widehat{y}_t^{\beta}).
\end{equation}

\paragraph{Softmax readout, Cross-Entropy loss function.}
In this paper, we propose an alternative approach, where the output $\widehat{y}$ is not a part of the state variable $s$ but is instead implemented as a read-out (see Fig.~\ref{fig:archi}), which is a function of $s$ and of a weight matrix $w_{\rm out}$ of size $\dim(y) \times \dim(s)$. In practice, $w_{\rm out}$ reads out the last convolutional layer. At each time step $t$ we define:
\begin{equation}
    \widehat{y}_t = \mbox{softmax}(w_{\rm out}\cdot s_t).
\end{equation}
The cross-entropy cost function associated with the softmax readout is then:
\begin{equation}
\label{eq:cross-entropy}
\ell(s,y,w_{\rm out}) = - \sum_{c=1}^C y_c \log(\textrm{softmax}_c(w_{\rm out} \cdot s)).
\end{equation}
Using $\frac{\partial \ell}{\partial s}(s,y,w_{\rm out}) = w_{\rm out}^\top \cdot \left( \textrm{softmax}(w_{\rm out} \cdot s) - y \right)$, the second phase dynamics given by Eq.~(\ref{eq:secondphase}) read in this context:
\begin{equation}
s_{t+1}^{\beta} = \frac{\partial \Phi}{\partial s}(x, s_{t}^{\beta}, \theta) + \beta \; w_{\rm out}^\top \cdot \left( y - \widehat{y}_t^\beta \right).
\end{equation}
Note here that the loss $\mathcal{L}^* = \ell(s_*,y,w_{\rm out})$ also depends on the parameter $w_{\rm out}$. Appendix~\ref{sec:appConvSym-CE} provides the learning rule applied to $w_{\rm out}$.

\section{Convolutional RNNs}
\label{sec:appConvArch}

Throughout this section, $N^{\rm conv}$ and $N^{\rm fc}$ denote respectively the number of convolutional layers and fully connected layers in the convolutional RNN, and $N^{\rm tot} \defeq N^{\rm conv} + N^{\rm fc}$. The neuron layers are denoted by $s$ and range from $s^{0}=x$ the input to the output $s^{N^{\rm tot}}$ in the case of squared error, or $s^{N^{\rm tot}-1}$ in the case of softmax read-out. 

\subsection{Definition of the operations}
\label{sec:appConvOP}
In this subsection we detail the operations involved in the dynamics of a convolutional RNN.

\begin{itemize}
    \item The 2-D convolution between $w$ with dimension $(C_{\rm out}, C_{\rm in}, F,F)$ and an input $x$ of dimensions $(C_{\rm in}, H_{\rm in}, W_{\rm in})$ and stride one is a tensor $y$ of size $(C_{\rm out}, H_{\rm out}, W_{\rm out})$ defined by:
    \begin{equation}
        y_{c,h,w} = (w \star x)_{c,h,w} = B_c + \sum_{i=0}^{C_{\rm in}-1} \sum_{j=0}^{F-1}\sum_{k=0}^{F-1} w_{c,i,j,k}x_{i,j+h,k+w}, 
    \end{equation}
    where $B_c$ is a channel-wise bias.
    \item The 2-D transpose convolution of $y$ by $\tilde{w}$ is then defined in this work as the gradient of the 2-D convolution with respect to its input:
    \begin{equation}
        (\tilde{w} \star y) \defeq \frac{\partial (w \star x)}{\partial x}\cdot y
    \end{equation}
    \item The dot product ``$\bullet$'' generalized to pairs of tensors of same shape $(C,H,W)$:
    \begin{equation}
        a \bullet b = \sum_{c=0}^{C-1}\sum_{h=0}^{H-1}\sum_{w=0}^{W-1} a_{c,h,w}b_{c,h,w}.
    \end{equation}
    \item The pooling operation $\mathcal{P}$ with stride $F$ and filter size $F$ of $x$:
    \begin{equation}
        \mathcal{P}_{F}(x)_{c,h,w} = \underset{i,j \in [0,F-1]}{\rm max}  \left\{ x_{c, F(h-1)+1+i, F(w-1)+1+j} \right\},
    \end{equation}
    with relative indices of maximums within each pooling zone given by:
    \begin{equation}
        {\rm ind}_{\mathcal{P}}(x)_{c,h,w} = \underset{i,j \in [0,F-1]}{\rm argmax}  \left\{ x_{c, F(h-1)+1+i, F(w-1)+1+j} \right\} = (i^{*}(x,h), j^{*}(x,w)).
    \end{equation}
    \item The unpooling operation $\mathcal{P}^{-1}$ of $y$ with indices ${\rm ind}_{\mathcal{P}}(x)$ is then defined as:
    \begin{equation}
        \mathcal{P}^{-1}(y, {\rm ind}_{\mathcal{P}}(x))_{c,h,w} = \sum_{i,j} y_{c,i,j}\cdot \delta_{h, F(i-1)+1+i^{*}(x,h)} \cdot \delta_{w, F(j-1)+1+j^{*}(x,w)},
    \end{equation}
    which consists in filling a tensor with the same dimensions as $x$  with the values of $y$ at the indices ${\rm ind}_{\mathcal{P}}(x)$, and zeroes elsewhere. For notational convenience, we omit to write explicitly the dependence on the indices except when appropriate. 
    \item The flattening operation $\mathcal{F}$ is defined as reshaping a tensor of dimensions $(C,H,W)$ to $(1, CHW)$. We denote by $\mathcal{F}^{-1}$ its inverse.
\end{itemize}

\subsection{Definition of the primitive function}
\label{sec:appConvPhi}
For notational simplicity here, whether $w_n$ is a convolutional layer or a fully connected layer is implied by the operator, respectively $\star$ for convolutions and $\cdot$ for linear layers. The primitive function can therefore be defined as:
\begin{equation}
\label{eq:phiCNN}
    \Phi(x, \{s^{n}\}) = \sum_{n =0}^{N^{\rm conv}-1} s^{n+1}\bullet\mathcal{P}\left(w_{n+1}\star s^{n}\right) + \sum_{n = N^{\rm conv}}^{N^{\rm tot}-1} s^{n \top}\cdot w_{n+1}\cdot s^{n+1}, 
\end{equation}
where $\bullet$ is the Euclidean scalar product generalized to pairs of tensors with same arbitrary dimension, and $\mathcal{P}$ is a pooling operation. 

\subsection{Convolutional RNNs with bidirectional connections}
\label{sec:appConvSym}

In this section, we write explicitly the dynamics and the learning rules applied for the convolutional architecture with bidirectional connections, for the Squared loss function and the Cross-Entropy loss function, for the one-sided and symmetric estimates.

\subsubsection{Squared Error loss}
\label{sec:appConvSym-SE}
\paragraph{Equations of the dynamics.} In this case, the dynamics read:

\begin{align}
%\nonumber
\label{eq:conv-archi-sym}
\left\{
\begin{array}{l}
\displaystyle s^{n+1}_{t+1} = \sigma \left( \mathcal{P}(w_{n+1} \star s^{n}_{t}) + \tilde{w}_{n+2} \star \mathcal{P}^{-1}(s^{n+2}_{t}) \right), \qquad \forall n \in [0, N^{\rm conv}-2] \\
\displaystyle s^{N^{\rm conv}}_{t+1} = \sigma \left( \mathcal{P}(w_{N^{\rm conv}} \star s^{N^{\rm conv}-1}_{t}) + \mathcal{F}^{-1}({w_{N^{\rm conv}+1}}^{\top} \cdot s^{N^{\rm conv}+1}_{t}) \right), \\
\displaystyle s^{N^{\rm conv} + 1}_{t+1} = \sigma \left( w_{N^{\rm conv} + 1} \cdot \mathcal{F}(s^{N^{\rm conv}}_{t}) + {w_{N^{\rm conv} + 2}}^{\top} \cdot s^{N^{\rm conv} + 2}_{t} \right), \\
\displaystyle s_{t+1}^{n+1} = \sigma \left( w_{n+1} \cdot s^{n}_{t} + {w_{n+2}}^{\top} \cdot s^{n+2}_{t} \right), \qquad \forall n \in [N^{\rm conv} + 1, N^{\rm tot}-2] \\
\displaystyle s_{t+1}^{N^{\rm tot}} = \sigma \left( w_{N^{\rm tot}} \cdot s^{N^{\rm tot}-1}_{t}\right) + \beta(y - s^{N^{\rm tot}}), \quad \text{with $\beta=0$ during the first phase,}
\end{array}
\right.
\end{align}
where we take the convention $s^{0}=x$. In this case, we have $\hat{y} = s_{t+1}^{N^{\rm tot}}$.
Considering the function:

\begin{align*}
    \Phi(x, s^{1}, \cdots, s^{N^{\rm tot}}) &= 
    \sum_{n= N_{\rm conv} + 2}^{N_{\rm tot} - 1} s^{{n + 1}^\top}\cdot w_{n}\cdot s^{n}
    + s^{N_{\rm conv} + 1}\cdot w_{N_{\rm conv} + 1}\cdot \mathcal{F}(s_{t}^{N_{\rm conv}})\\
    &+\sum_{n = 1}^{N_{\rm conv} - 1} s^{n + 1}\bullet\mathcal{P}\left(w_{n + 1}\star s^{n}\right) + s^{1}\bullet\mathcal{P}\left(w_{1}\star x\right), 
\end{align*}

when ignoring the activation function, we have:

\begin{equation}
\label{eq:dphids-SE}
\forall n \in [1, N^{\rm tot}]: \quad s_t^n \approx  \frac{\partial \Phi}{\partial s^n}.
\end{equation}

Note that in the case of the Squared Error loss function, the dynamics of the output layer derive from $\Phi$ as it can be seen by Eq.~(\ref{eq:dphids-SE}). 

\paragraph{Learning rules for the one-sided EP estimator.} In this case, the learning rules read:

\begin{align}
   \left\{
\begin{array}{l}
\forall n \in [N_{\rm conv} + 2, N_{\rm tot} - 1]: \quad \Delta w_{n}  =
\frac{1}{\beta}\left(s_{*}^{n + 1, \beta}\cdot s_{*}^{{n, \beta}^\top} - s_{*}^{n + 1}\cdot s_{*}^{{n}^\top}  \right) \\
\Delta w_{N_{\rm conv}+ 1}  =
\frac{1}{\beta}\left(s_{*}^{N_{\rm conv} + 1, \beta}\cdot \mathcal{F}\left(s^{N_{\rm conv}, \beta}_{*}\right)^\top - s_{*}^{N_{\rm conv} + 1}\cdot \mathcal{F}\left(s^{N_{\rm conv}}_{*}\right)^\top \right) \\
\forall n \in [1, N_{\rm conv} - 1]:\quad \Delta w_{n + 1}  =  \frac{1}{\beta} \left(\mathcal{P}^{-1}(s^{n + 1, \beta}_{*})\star s^{n, \beta}_{*} - \mathcal{P}^{-1}(s^{n + 1}_{*})\star s^{n}_{*} \right)\\
\Delta w_1  =  \frac{1}{\beta} \left(\mathcal{P}^{-1}(s^{1, \beta}_{*})\star x - \mathcal{P}^{-1}(s^{1}_{*})\star x \right)  
\end{array}, 
\right. 
\label{deltaconv-sym-SE}
\end{align}

\paragraph{Learning rules for the symmmetric EP estimator.} In this case, the learning rules read:

\begin{align}
   \left\{
\begin{array}{l}
\forall n \in [N_{\rm conv} + 2, N_{\rm tot} - 1]: \quad \Delta w_{n}  =
\frac{1}{2\beta}\left(s_{*}^{n + 1, \beta}\cdot s_{*}^{{n, \beta}^\top} - s_{*}^{n + 1, -\beta}\cdot s_{*}^{{n, -\beta}^\top}  \right) \\
\Delta w_{N_{\rm conv}+ 1}  =
\frac{1}{2\beta}\left(s_{*}^{N_{\rm conv} + 1, \beta}\cdot \mathcal{F}\left(s^{N_{\rm conv}, \beta}_{*}\right)^\top - s_{*}^{N_{\rm conv} + 1, -\beta}\cdot \mathcal{F}\left(s^{N_{\rm conv}, -\beta}_{*}\right)^\top \right) \\
\forall n \in [1, N_{\rm conv} - 1]:\quad \Delta w_{n + 1}  =  \frac{1}{2\beta} \left(\mathcal{P}^{-1}(s^{n + 1, \beta}_{*})\star s^{n, \beta}_{*} - \mathcal{P}^{-1}(s^{n + 1, -\beta}_{*})\star s^{n, -\beta}_{*} \right)\\
\Delta w_1  =  \frac{1}{2\beta} \left(\mathcal{P}^{-1}(s^{1, \beta}_{*})\star x - \mathcal{P}^{-1}(s^{1, -\beta}_{*})\star x \right)  
\end{array}, 
\right. 
\label{deltaconv-sym-SE2}
\end{align}

\subsubsection{Cross-Entropy loss}
\label{sec:appConvSym-CE}
\paragraph{Equations of the dynamics.} In this case, the dynamics read:

\begin{align}
%\nonumber
\label{eq:conv-archi-sym-softmax}
\left\{
\begin{array}{l}
\displaystyle s^{n+1}_{t+1} = \sigma \left( \mathcal{P}(w_{n+1} \star s^{n}_{t}) + \tilde{w}_{n+2} \star \mathcal{P}^{-1}(s^{n+2}_{t}) \right), \qquad \forall n \in [0, N^{\rm conv}-2] \\
\displaystyle s^{N^{\rm conv}}_{t+1} = \sigma \left( \mathcal{P}(w_{N^{\rm conv}} \star s^{N^{\rm conv}-1}_{t}) + \mathcal{F}^{-1}({w_{N^{\rm conv}+1}}^{\top} \cdot s^{N^{\rm conv}+1}_{t}) \right), \\
\displaystyle s^{N^{\rm conv} + 1}_{t+1} = \sigma \left( w_{N^{\rm conv} + 1} \cdot \mathcal{F}(s^{N^{\rm conv}}_{t}) + {w_{N^{\rm conv} + 2}}^{\top} \cdot s^{N^{\rm conv} + 2}_{t} \right), \\
\displaystyle s_{t+1}^{n+1} = \sigma \left( w_{n+1} \cdot s^{n}_{t} + {w_{n+2}}^{\top} \cdot s^{n+2}_{t} \right), \qquad \forall n \in [N^{\rm conv} + 1,N^{\rm tot}-3] \\
\displaystyle s_{t+1}^{N^{\rm tot}-1} = \sigma \left( w_{N^{\rm tot}-1} \cdot s^{N^{\rm tot}-2}_{t}\right) + \beta {w_{\rm out}}^{\top} \cdot (y - \hat{y}) \quad \text{with $\beta=0$ during the first phase,}\\
\displaystyle \hat{y} = {\rm softmax}(w_{\rm out} \cdot s^{N^{\rm tot}-1}_t),
\end{array}
\right.
\end{align}

where we keep again the convention $s^0 = x$. Considering the function:

\begin{align*}
    \Phi(x, s^{1}, \cdots, s^{N^{\rm tot} - 1}) &= 
    \sum_{n= N_{\rm conv} + 1}^{N_{\rm tot} - 2} s^{{n + 1}^\top}\cdot w_{n}\cdot s^{n}
    + s^{N_{\rm conv} + 1}\cdot w_{N_{\rm conv} + 1}\cdot \mathcal{F}(s_{t}^{N_{\rm conv}})\\
    &+\sum_{n = 1}^{N_{\rm conv} - 1} s^{n + 1}\bullet\mathcal{P}\left(w_{n + 1}\star s^{n}\right) + s^{1}\bullet\mathcal{P}\left(w_{1}\star x\right), 
\end{align*}

when ignoring the activation function, we have:

\begin{equation}
\label{eq:dphids-CE}
 \forall n \in [1, N^{\rm tot} - 1]: \quad s_t^n \approx  \frac{\partial \Phi}{\partial s^n}, \qquad  \hat{y} = {\rm softmax}(w_{\rm out} \cdot s^{N^{\rm tot}-1}_t). \\
\end{equation}

Note that in this case and contrary to the Squared Error loss function, the dynamics of the output layer do not derive from the primitive function $\Phi$, as it can be seen from Eq.~(\ref{eq:dphids-CE})

\paragraph{Learning rules for the one-sided EP estimator.} In this case, the learning rules read:

\begin{align}
   \left\{
\begin{array}{l}
\Delta w_{\rm out} = -\left( \widehat{y}_*^\beta - y \right)\cdot s_*^{\beta, N^\top}. \\
\forall n \in [N_{\rm conv} + 2, N_{\rm tot} - 2]: \quad \Delta w_{n}  =
\frac{1}{\beta}\left(s_{*}^{n + 1, \beta}\cdot s_{*}^{{n, \beta}^\top} - s_{*}^{n + 1}\cdot s_{*}^{{n}^\top}  \right) \\
\Delta w_{N_{\rm conv}+ 1}  =
\frac{1}{\beta}\left(s_{*}^{N_{\rm conv} + 1, \beta}\cdot \mathcal{F}\left(s^{N_{\rm conv}, \beta}_{*}\right)^\top - s_{*}^{N_{\rm conv} + 1}\cdot \mathcal{F}\left(s^{N_{\rm conv}}_{*}\right)^\top \right) \\
\forall n \in [1, N_{\rm conv} - 1]:\quad \Delta w_{n + 1}  =  \frac{1}{\beta} \left(\mathcal{P}^{-1}(s^{n + 1, \beta}_{*})\star s^{n, \beta}_{*} - \mathcal{P}^{-1}(s^{n + 1}_{*})\star s^{n}_{*} \right)\\
\Delta w_1  =  \frac{1}{\beta} \left(\mathcal{P}^{-1}(s^{1, \beta}_{*})\star x - \mathcal{P}^{-1}(s^{1}_{*})\star x \right)  
\end{array}.
\right. 
\label{deltaconv}
\end{align}

\paragraph{Learning rules for the symmetric EP estimator.} In this case, the learning rules read:

\begin{align}
   \left\{
\begin{array}{l}
\Delta w_{\rm out} = -\frac{1}{2}\left(\left( \widehat{y}_*^\beta - y \right)\cdot s_*^{\beta, N^\top} + \left( \widehat{y}_*^{-\beta} - y \right)\cdot s_*^{-\beta, N^\top}\right). \\
\forall n \in [N_{\rm conv} + 2, N_{\rm tot} - 2]: \quad \Delta w_{n}  =
\frac{1}{2\beta}\left(s_{*}^{n + 1, \beta}\cdot s_{*}^{{n, \beta}^\top} - s_{*}^{n + 1, -\beta}\cdot s_{*}^{{n, -\beta}^\top}  \right) \\
\Delta w_{N_{\rm conv}+ 1}  =
\frac{1}{2\beta}\left(s_{*}^{N_{\rm conv} + 1, \beta}\cdot \mathcal{F}\left(s^{N_{\rm conv}, \beta}_{*}\right)^\top - s_{*}^{N_{\rm conv} + 1, -\beta}\cdot \mathcal{F}\left(s^{N_{\rm conv, -\beta}}_{*}\right)^\top \right) \\
\forall n \in [1, N_{\rm conv} - 1]:\quad \Delta w_{n + 1}  =  \frac{1}{2\beta} \left(\mathcal{P}^{-1}(s^{n + 1, \beta}_{*})\star s^{n, \beta}_{*} - \mathcal{P}^{-1}(s^{n + 1, -\beta}_{*})\star s^{n, -\beta}_{*} \right)\\
\Delta w_1  =  \frac{1}{2\beta} \left(\mathcal{P}^{-1}(s^{1, \beta}_{*})\star x - \mathcal{P}^{-1}(s^{1, -\beta}_{*})\star x \right)  
\end{array}.
\right. 
\label{deltaconv2}
\end{align}

\subsubsection{Implementation details in PyTorch.} The equation of the dynamics as well as the EP estimates computation can be expressed as derivatives of the primitive function $\Phi$. 
Therefore, the automatic differentiation framework provided by PyTorch can be leveraged to implement implicitly the equations of the dynamics and the EP estimates computation by differentiating $\Phi$. 
Although this implementation is slower than explicitly implementing the equations of the dynamics, it is more flexible in terms of network architecture as $\Phi$ is relatively easy to compute.

\subsection{Convolutional RNNs with unidirectional connections}
\label{sec:appConvAsym}
In this section, we write the explicit definition of the dynamics and the learning rule of a convolutional architecture with unidirectional connections where forward and backward connections are no longer constrained to be equal-valued. In this setting, we use the Cross-Entropy loss function along with a softmax readout to implement the output layer of the network. In this setting, the dynamics of Eq.~(\ref{eq:dynamics}) is changed into the more general form: 
\begin{equation}
\label{eq:general-dynamics}
    s_{t+1} = F(x, s_t, \theta),
\end{equation}
and the original Vector Field learning rule reads:
\begin{equation}
\label{eq:vf-estimate}
    \Delta \theta = \eta \widehat{\nabla}^{\rm{VF}}(\beta), \qquad \text{where} \qquad \widehat{\nabla}^{\rm{VF}}(\beta) \defeq \frac{1}{\beta} \frac{\partial F}{\partial \theta}(x, s_*, \theta)^\top \cdot \left(s^\beta_* - s_*\right),
\end{equation}
where VF stands for Vector Field \citep{scellier2018generalization}. If the transition function $F$ derives from a primitive function $\Phi$ (i.e., if $F = \frac{\partial \Phi}{\partial s}$), then $\widehat{\nabla}^{\rm VF}(\beta)$ is equal to $\widehat{\nabla}^{\rm EP}(\beta)$ in the limit $\beta \to 0$ ( i.e. $\lim_{\beta \to 0}\widehat{\nabla}^{\rm VF}(\beta) = \lim_{\beta \to 0}\widehat{\nabla}^{\rm EP}(\beta)$).

\paragraph{Equations of the dynamics.}
In this setting, the dynamics Eq.~(\ref{eq:conv-archi-sym-softmax}) have simply to be changed into:

\begin{align}
%\nonumber
\label{eq:conv-archi-asym-softmax}
\left\{
\begin{array}{l}
\displaystyle s^{n+1}_{t+1} = \sigma \left( \mathcal{P}(w^{\rm f}_{n+1} \star s^{n}_{t}) + \tilde{w}_{n+2}^{\rm b} \star \mathcal{P}^{-1}(s^{n+2}_{t}) \right), \qquad \forall n \in [0, N^{\rm conv}-2] \\
\displaystyle s^{N^{\rm conv}}_{t+1} = \sigma \left( \mathcal{P}(w^{\rm f}_{N^{\rm conv}} \star s^{N^{\rm conv}-1}_{t}) + \mathcal{F}^{-1}({w^{\rm b}_{N^{\rm conv}+1}}^{\top} \cdot s^{1}_{t}) \right), \\
\displaystyle s^{N^{\rm conv} + 1}_{t+1} = \sigma \left( w_{N^{\rm conv} + 1}^{\rm f} \cdot \mathcal{F}(s^{N^{\rm conv}}_{t}) + {w_{N^{\rm conv} + 2}}^{\rm b^{\top}} \cdot s^{N^{\rm conv} + 2}_{t} \right), \\
\displaystyle s_{t+1}^{n+1} = \sigma \left( w^{\rm f}_{n+1} \cdot s^{n}_{t} + {w^{\rm b}_{n+2}}^{\top} \cdot s^{n+2}_{t} \right), \qquad \forall n \in [N^{\rm conv} + 1, N^{\rm tot}-3] \\
\displaystyle s_{t+1}^{N^{\rm tot}-1} = \sigma \left( w^{\rm f}_{N^{\rm tot}-1} \cdot s^{N^{\rm tot}-2}_{t}\right) + \beta {w_{\rm out}}^{\top} \cdot (y - \hat{y}),\\
\displaystyle \hat{y} = {\rm softmax}(w_{\rm out} \cdot s^{N^{\rm tot}-1}_t),
\end{array}
\right.
\end{align}

where we distinguish now between forward and backward connections: $w^{\rm f}_n \neq w^{\rm b}_n \quad \forall n \in [1, N_{\rm tot} - 2]$. 

\paragraph{Original Vector Field learning rule (VF).}

The symmetric version of the original Vector Field learning rule is defined as:

\begin{equation}
\label{eq:sym-vf-estimate}
 \widehat{\nabla}^{\rm{VF}}_{\rm sym}(\beta) \defeq \frac{1}{2\beta} \frac{\partial F}{\partial \theta}(x, s_*, \theta)^\top \cdot \left(s^\beta_* - s^{-\beta}_*\right),
\end{equation}

which yields in the case of softmax read-out:

\begin{align}
   \left\{
\begin{array}{l}
\Delta w_{\rm out} = -\frac{1}{2}\left(\left( \widehat{y}_*^\beta - y \right)\cdot s_*^{\beta, N^\top} + \left( \widehat{y}_*^{-\beta} - y \right)\cdot s_*^{-\beta, N^\top}\right). \\
\forall n \in [N_{\rm conv} + 2, N_{\rm tot} - 2]: \quad \Delta w_{n}^{\rm f}  =
\frac{1}{2\beta} \left(s_{*}^{n + 1, \beta}-s_{*}^{n + 1, -\beta} \right)\cdot s_{*}^{{n}^\top} \\
\forall n \in [N_{\rm conv} + 2, N_{\rm tot} - 2]: \quad \Delta w_{n}^{\rm b}  =
\frac{1}{2\beta} s_{*}^{{n+1}} \cdot \left(s_{*}^{n, \beta}-s_{*}^{n, -\beta} \right)^{\top} \\
\Delta w_{N_{\rm conv}+ 1}^{\rm f}  =
\frac{1}{2\beta}\left(s_{*}^{N_{\rm conv} + 1, \beta} - s_{*}^{N_{\rm conv} + 1, -\beta}\right) \cdot  \mathcal{F}\left(s^{N_{\rm conv}}_{*}\right)^\top \\
\Delta w_{N_{\rm conv}+ 1}^{\rm b}  =
\frac{1}{2\beta} s_{*}^{N_{\rm conv} + 1} \cdot \left(\mathcal{F}\left(s^{N_{\rm conv}, \beta}_{*}\right) -  \mathcal{F}\left(s^{N_{\rm conv}, -\beta}_{*}\right) \right)^{\top}  \\
\forall n \in [1, N_{\rm conv} - 1]:\quad \Delta w_{n + 1}^{\rm f}  =  \frac{1}{2\beta} \left(\mathcal{P}^{-1}(s^{n + 1, \beta}_{*}) - \mathcal{P}^{-1}(s^{n + 1, -\beta}_{*}) \right) \star s^{n}_{*}\\
\forall n \in [1, N_{\rm conv} - 1]:\quad \Delta w_{n + 1}^{\rm b}  =  \frac{1}{2\beta} \mathcal{P}^{-1}(s^{n + 1}_{*})\star \left( s^{n, \beta}_{*} -  s^{n, -\beta}_{*} \right)\\
\Delta w_1  =  \frac{1}{2\beta} \left(\mathcal{P}^{-1}(s^{1, \beta}_{*})\star x - \mathcal{P}^{-1}(s^{1, -\beta}_{*})\star x \right)  
\end{array}.
\right. 
\label{deltaconvAsym}
\end{align}

Importantly, note that $\Delta w^{\rm f}_n \neq \Delta w^{\rm b}_n \quad \forall n \in [1, N_{\rm tot} - 2]$.

\paragraph{Kolen-Pollack algorithm.} When forward and backward weights have a common gradient estimate, and a weight decay term $\lambda$, they converge to the same values. We recall the proof, noting $t$ the iteration step, $\theta_{\rm f}$ and $\theta_{\rm b}$ respectively forward and backward weights, the update rule follows:

\begin{align*}
\left\{
\begin{array}{l}
\displaystyle \theta_{\rm f}(t+1) = \theta_{\rm f}(t) + \Delta \theta_{\rm f} \\
\displaystyle \theta_{\rm b}(t+1) = \theta_{\rm b}(t) + \Delta \theta_{\rm b}
\end{array}
\right..
\end{align*}

We can then write

\begin{align*}
    \theta_{\rm f}(t+1) - \theta_{\rm b}(t+1) &= \theta_{\rm f}(t) - \theta_{\rm b}(t) + \Delta \theta_{\rm f} - \Delta \theta_{\rm b} \\
    &= \theta_{\rm f}(t) - \theta_{\rm b}(t) - \eta \lambda \left( \theta_{\rm f}(t) - \theta_{\rm b}(t) \right) \\
    &= (1 - \eta \lambda) \left( \theta_{\rm f}(t) - \theta_{\rm b}(t) \right),
\end{align*}

where we use the fact that the estimates are the same for both parameters, such that they cancel out. Then by recursion :

\begin{align*}
    \theta_{\rm f}(t) - \theta_{\rm b}(t) &= (1 - \eta \lambda)^{t} \left( \theta_{\rm f}(0) - \theta_{\rm b}(0) \right) \underset{t \to \infty}{\rightarrow} 0, \quad \text{since} \quad |1 - \eta \lambda| < 1.
\end{align*}

\paragraph{Kolen-Pollack Vector Field learning rule (KP-VF).} Assuming general dynamics of the form of Eq.~(\ref{eq:general-dynamics}), we distinguish forward connections $\theta_{\rm f}$ from backward connections $\theta_{\rm b}$ so that $\theta = \{\theta_{\rm f}, \theta_{\rm b}\}$, with $\theta_{\rm f}$ and $\theta_{\rm b}$ having same dimension. Assuming a first phase, a second phase with $\beta > 0$ and a third phase with $-\beta$, we define:
\begin{equation}
\label{eq:KP-VF}
\forall {\rm i \in \{ f,b \} }, \qquad \overline{\nabla_{\theta_{\rm i}}^{\rm VF}}(\beta) = \frac{1}{2 \beta} \left( \frac{\partial F}{\partial \theta_{\rm i}}^\top(x, s_*^\beta, \theta) \cdot s_*^\beta - \frac{\partial F}{\partial \theta_{\rm i}}^\top(x, s_*^{-\beta}, \theta)\cdot s_*^{-\beta} \right)
\end{equation}
and we propose the following update rules:
\begin{align}
\label{eq:new-vf}
\left\{
\begin{array}{l}
\displaystyle \Delta \theta_{\rm f} = \eta \left(\widehat{\nabla}^{\rm{KP-VF}}_{\rm sym}(\beta) - \lambda \theta_{\rm f}\right)\\
\displaystyle \Delta \theta_{\rm b} = \eta \left( \widehat{\nabla}^{\rm{KP-VF}}_{\rm sym}(\beta) - \lambda \theta_{\rm b}\right)
\end{array}
\right.,
\quad \mbox{with} \quad \widehat{\nabla}^{\rm{KP-VF}}_{\rm sym}(\beta) = \frac{1}{2}(\overline{\nabla_{\theta_{\rm f}}^{\rm VF}}(\beta)+\overline{\nabla_{\theta_{\rm b}}^{\rm VF}}(\beta))
\end{align}
where $\eta$ is the learning rate and $\lambda$ a leakage parameter. The estimate $\widehat{\nabla}^{\rm{KP-VF}}_{\rm sym}(\beta)$ can be thought of a generalization of Eq.~(\ref{eq:thirdphase}), as highlighted in Appendix \ref{sec:appConvAsym} with an explicit application of Eq.~(\ref{eq:new-vf}) to a ConvNet. More specifically, applying Eq.~(\ref{eq:KP-VF}) to Eq.~(\ref{eq:conv-archi-asym-softmax}) yields:

\begin{align}
   \left\{
\begin{array}{l}
\forall n \in [N_{\rm conv} + 2, N_{\rm tot} - 2]: \\ \qquad \qquad \qquad  \overline{\nabla_{w_{n}^{\rm f}}^{\rm VF}}(\beta) = \overline{\nabla_{w_{n}^{\rm b}}^{\rm VF}}(\beta) =
\frac{1}{2\beta} \left(s_{*}^{n + 1, \beta}\cdot s_{*}^{{n, \beta}^\top}-s_{*}^{n + 1, -\beta}\cdot s_{*}^{{n, -\beta}^\top} \right) \\
\overline{\nabla_{w_{N_{\rm conv}+ 1}^{\rm f}}^{\rm VF}}(\beta)= \overline{\nabla_{w_{N_{\rm conv}+ 1}^{\rm b}}^{\rm VF}}(\beta) = \\
\qquad \qquad \qquad  \frac{1}{2\beta}\left(s_{*}^{N_{\rm conv} + 1, \beta}\cdot  \mathcal{F}\left(s^{N_{\rm conv}, \beta}_{*}\right)^\top - s_{*}^{N_{\rm conv} + 1, -\beta}\cdot  \mathcal{F}\left(s^{N_{\rm conv}, -\beta}_{*}\right)^\top\right) \\
\forall n \in [1, N_{\rm conv} - 1]: \\ \overline{\nabla_{w_{n + 1}^{\rm f}}^{\rm VF}}(\beta)=  \frac{1}{2\beta} \left(\mathcal{P}^{-1}\left(s^{n + 1, \beta}_{*}, {\rm ind}_{\mathcal{P}}\left(w_{n + 1}^{\rm f}\star s^{n, \beta}_{*}\right)\right)\star s^{n, \beta}_{*}\right. \\
\qquad \qquad \qquad \qquad \qquad \qquad \qquad \left.- \mathcal{P}^{-1}\left(s^{n + 1, -\beta}_{*} , {\rm ind}_{\mathcal{P}}\left(w_{n + 1}^{\rm f}\star s^{n, -\beta}_{*}\right)\right)\star s^{n, -\beta}_{*} \right)\\
\forall n \in [1, N_{\rm conv} - 1]: \\ \overline{\nabla_{w_{n + 1}^{\rm b}}^{\rm VF}}(\beta) =  \frac{1}{2\beta} \left(\mathcal{P}^{-1}\left(s^{n + 1, \beta}_{*} , {\rm ind}_{\mathcal{P}}\left(w_{n + 1}^{\rm b}\star s^{n, \beta}_{*}\right)\right)\star s^{n, \beta}_{*} \right.\\
\qquad \qquad \qquad \qquad \qquad \qquad \qquad \left.  - \mathcal{P}^{-1}\left(s^{n + 1, -\beta}_{*}, {\rm ind}_{\mathcal{P}}\left(w_{n + 1}^{\rm b}\star s^{n, -\beta}_{*}\right)\right)\star s^{n, -\beta}_{*} \right)\\
\end{array}.
\right. 
\label{eq:nabla-KPVF}
\end{align}

Combining Eqs.~(\ref{eq:nabla-KPVF}) with Eq.~(\ref{eq:new-vf}) gives the associated parameter updates. The updates for $w_1$ and $w_{\rm out}$ are the same than those of Eq.~(\ref{deltaconvAsym}).
Importantly, note that while $\forall n \in [N_{\rm conv} + 1, N_{\rm tot} - 2]: \quad  \overline{\nabla_{w_{n}^{\rm f}}^{\rm VF}}(\beta) = \overline{\nabla_{w_{n}^{\rm b}}^{\rm VF}}(\beta)$, we have $\forall n \in [1, N_{\rm conv} - 1]: \quad  \overline{\nabla_{w_{n}^{\rm f}}^{\rm VF}}(\beta) \neq \overline{\nabla_{w_{n}^{\rm b}}^{\rm VF}}(\beta)$ because of inverse pooling. In other words, the updates of the convolutional filters do not solely depend on the pre and post synaptic activations but also on the location of the maximal elements within each pooling window, itself depending on the filter considered. Hence the motivation to average $\overline{\nabla_{w_{n}^{\rm f}}^{\rm VF}}(\beta) $ and $\overline{\nabla_{w_{n}^{\rm b}}^{\rm VF}}(\beta)$ and use this quantity to update to $w_{n}^{\rm b}$ and $w_{n}^{\rm f}$ and apply the Kolen-Pollack technique. 

\paragraph{Implementation details in PyTorch.} The dynamics in the case of unidirectional connections does not derive from a primitive function $\Phi$. Therefore, it is not possible to implicitly get the dynamics by differentiating one primitive function.
A way around is to get the VF dynamics by differentiating one quantity $\tilde{\Phi}^n$ by layer. This quantity is not a primitive function and is especially designed to get the right equations once differentiated. We define $\tilde{\Phi}^{n}(w_{n}^{\rm f}, w_{n+1}^{\rm b}, s^{n-1}, s^{n})$ by:

\begin{align}
   \left\{
\begin{array}{l}
\forall n \in [1, N_{\rm conv} - 1]: \tilde{\Phi}^n = s^n \bullet \mathcal{P}(w_{n}^{\rm f} \star s^{n - 1} ) + s^{n+1} \bullet \mathcal{P}(w_{n+1}^{\rm b}\star s^n) \\
\tilde{\Phi}^{N_{\rm conv}} = s^{N_{\rm conv}} \bullet \mathcal{P}(w_{N_{\rm conv}}^{\rm f} \star s^{N_{\rm conv} - 1} ) + s^{N_{\rm conv}+1} \cdot w_{N_{\rm conv}+1}^{\rm b}\cdot \mathcal{F}(s^{N_{\rm conv}}) \\
\forall n \in [N_{\rm conv}+1, N_{\rm tot} - 1]: \tilde{\Phi}^n = s^n \cdot w_{n}^{\rm f} \cdot s^{n - 1}  + s^{n+1} \cdot w_{n+1}^{\rm b}\cdot s^n \\
\tilde{\Phi}^{N_{\rm tot}-1} = s^{N_{\rm tot}-1} \cdot w_{N_{\rm tot}-1}^{\rm f} \cdot s^{N_{\rm tot} - 2} + \beta \ell( s^{N_{\rm tot}-1}, y, w_{\rm out})
\end{array},
\right.
\label{eq:cheatedPhi}
\end{align}
where $\ell$ is defined by Eq.~(\ref{eq:cross-entropy}), and $\beta=0$ in the first phase. Then, $\forall n \in [1, N^{\rm tot}-1]$, the dynamics of Eq.~(\ref{eq:conv-archi-asym-softmax}) read:

\begin{equation}
    s^{n}_{t+1} = \sigma \left( \frac{\partial \tilde{\Phi}^n}{\partial s^{n}}(w_{n}^{\rm f}, w_{n+1}^{\rm b}, s^{n-1}_{t}, s^{n}_{t})\right).
\end{equation}

The original VF update of Eq.~(\ref{deltaconvAsym}) can be written as $\forall n \in [1, N^{\rm tot}-1], \forall {\rm i} \in \{ {\rm f},{\rm b} \}$:

\begin{equation}
    \Delta w_{n}^{\rm i} = \frac{1}{2\beta}\left( \frac{\partial \tilde{\Phi}^n}{\partial w_{n}^{\rm i}}(s^{n, \beta}_*, s^{n-1}_*) - \frac{\partial \tilde{\Phi}^n}{\partial w_{n}^{\rm i}}(s^{n, -\beta}_*, s^{n-1}_*)\right), 
\end{equation}

and Eq.~(\ref{eq:nabla-KPVF}) as:

\begin{equation}
    \overline{\nabla_{w_{n}^{\rm i}}^{\rm VF}}(\beta) = \frac{1}{2\beta}\left( \frac{\partial \tilde{\Phi}^n}{\partial w_{n}^{\rm i}}(s^{n, \beta}_*, s^{n-1, \beta}_*) - \frac{\partial \tilde{\Phi}^n}{\partial w_{n}^{\rm i}}(s^{n, -\beta}_*, s^{n-1, -\beta}_*)\right). 
\end{equation}

\section{Experimental details}
\label{sec:appExpDetail}

\begin{figure}[ht!]
  \label{fig:mse}
  \centering
  \includegraphics[width=0.8\textwidth]{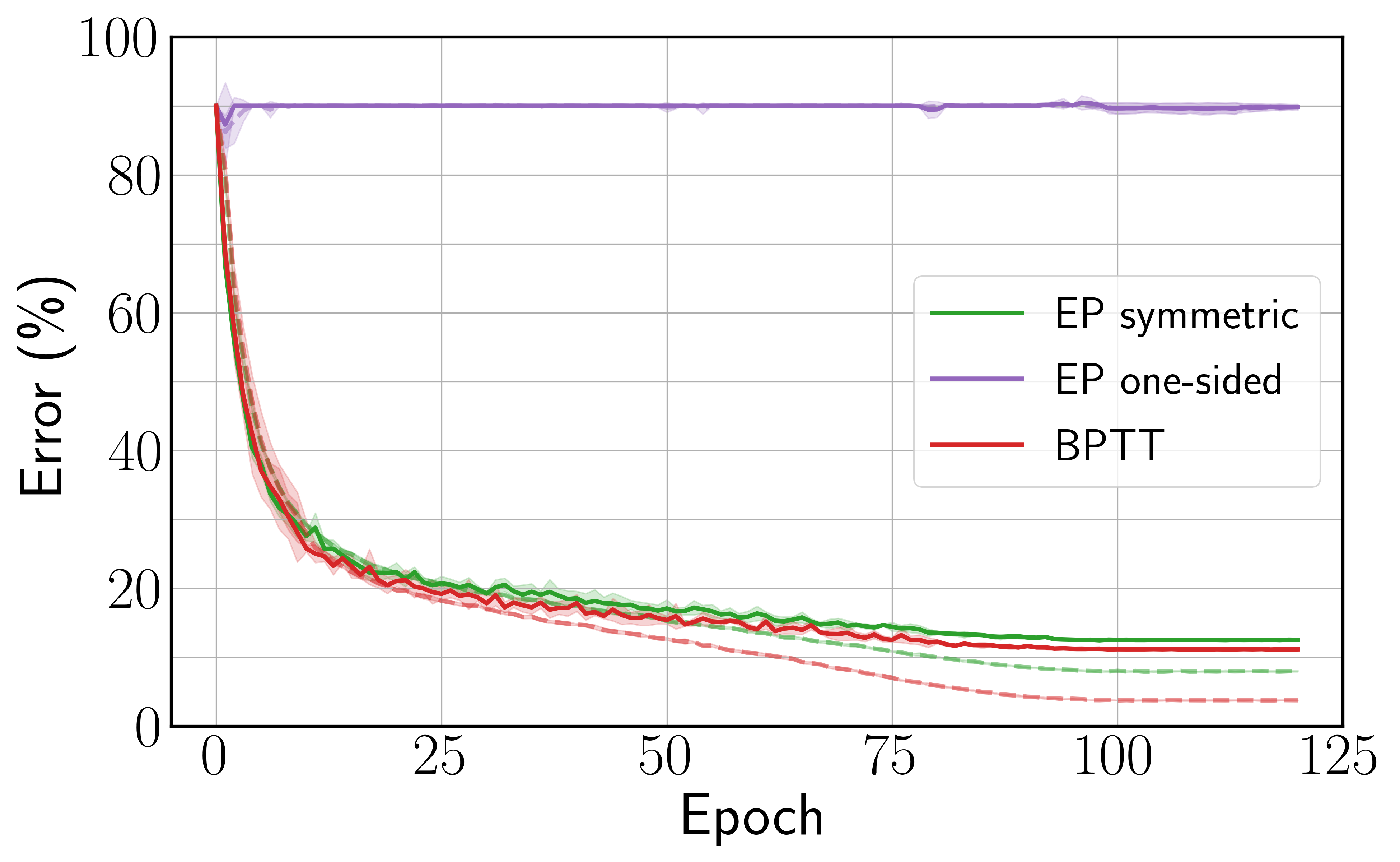}
  \caption{Train (dashed) and test (solid) errors on CIFAR-10 with the Squared Error loss function. The curves are averaged over 5 runs and shadows stand for $\pm 1$ standard deviation.}
\end{figure}

\begin{figure}[ht!]
  \label{fig:cel}
  \centering
  \includegraphics[width=0.8\textwidth]{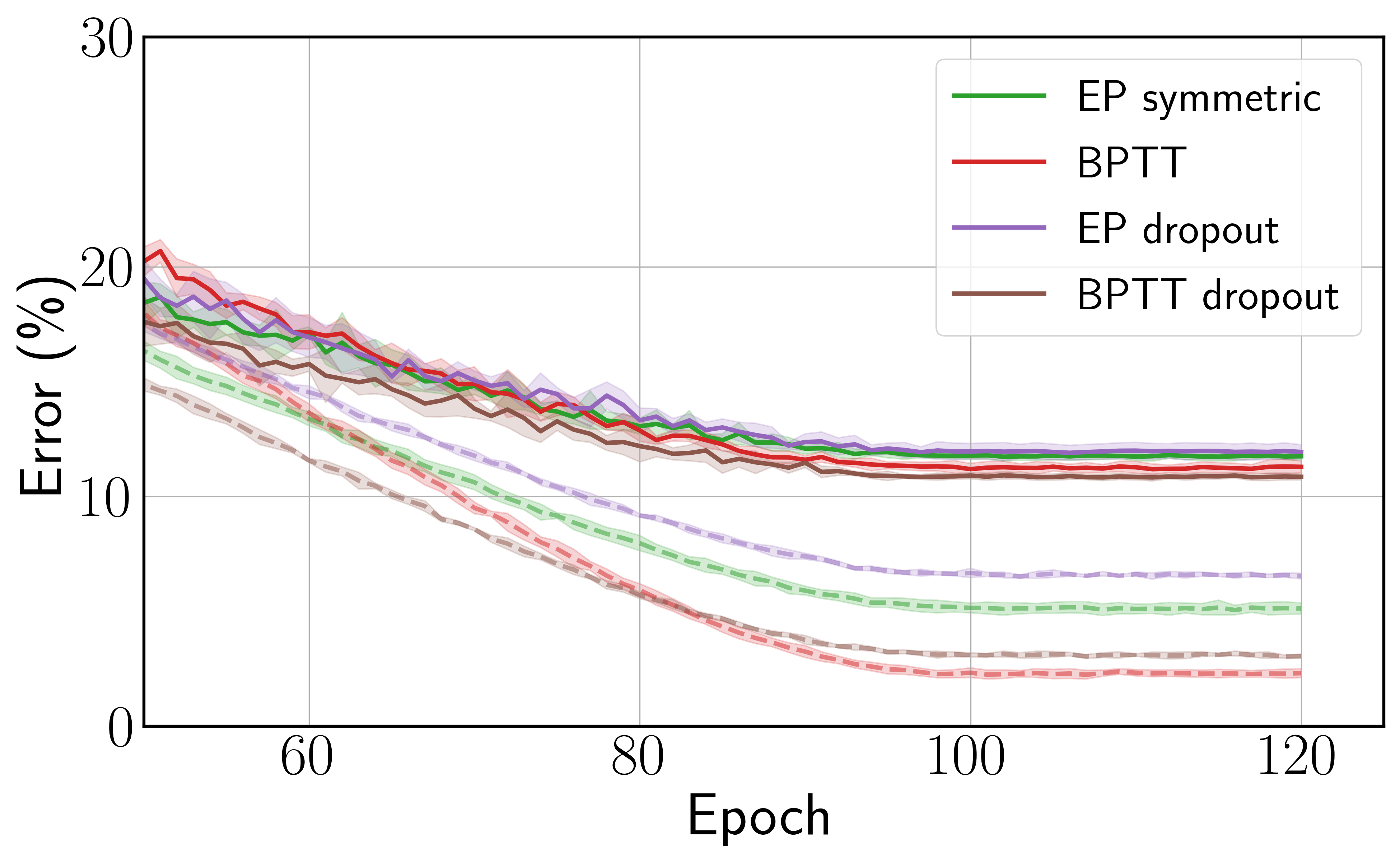}
  \caption{Train (dashed) and test (solid) errors on CIFAR-10 with the Cross-Entropy loss function. The curves are averaged over 5 runs and shadows stand for $\pm 1$ standard deviation.}
\end{figure}

\subsection{Environment and hyper-parameters}
The experiments are run using PyTorch 1.4.0 and torchvision 0.5.0. \citep{paszke2017automatic}. 
The convolutional architecture used in the CIFAR-10 experiment consists of four $3\times3$ convolutional layers of respective feature maps 128 - 256 - 512 - 512. 
We use a stride of one for each convolutional layer, and zero-padding of one for each layer except for the last layer. 
Each layer is followed by a $2\times2$ Max Pooling operation with a stride of two.
The resulting flattened feature vector is of size 512.
The weights are initialized using the default initialization of PyTorch, which is the uniform Kaiming initialization introduced by \citet{he2015delving}.
The data is normalized and augmented with random horizontal flips and random crops.
The training is performed with stochastic gradient descent with momentum and weight decay.
We use the learning rate scheduler introduced by \citet{loshchilov2016sgdr} to speed up convergence.
The simulations were carried across several servers consisting of 14 GPUs in total.
Each run was performed on a single GPU for an average run time of 2 days.

For the unidirectional architecture, the backward weights are defined for all convolutional layers except the first convolutional layer connected to the static input. 
The forward and backward weights are initialized independently at the beginning of training.
The backward weights have no bias contrary to their forward counterparts. 
The hyper-parameters such as learning rate, weight decay and momentum are shared between forward and backward weights.

\begin{table}[ht!]
\caption{Hyper-parameters used for the CIFAR-10 experiments.}
\label{tab:hyperparam}
\centering
\begin{tabular}{ccc}
\hline
Hyper-parameter                                                                  & Squared Error                             & Cross-Entropy                   \\ \hline
$T$                                                                             & 250                             & 250                             \\
$K$                                                                             & 30                              & 25                              \\
$\beta$                                                                         & 0.5                             & 1.0                             \\
Batch Size                                                                      & 128                             & 128                             \\
\begin{tabular}[c]{@{}c@{}}Initial learning rates\\ (Layer-wise)\end{tabular}   & 0.25 - 0.15 - 0.1 - 0.08 - 0.05 & 0.25 - 0.15 - 0.1 - 0.08 - 0.05 \\
Final learning rates                                                            & $10^{-5}$                       & $10^{-5}$                       \\
\begin{tabular}[c]{@{}c@{}}Weight decay\\ (All layers)\end{tabular}             & $3 \cdot 10^{-4}$               & $3 \cdot 10^{-4}$               \\
Momentum                                                                        & 0.9                             & 0.9                             \\
Epoch                                                                           & 120                             & 120                             \\
\begin{tabular}[c]{@{}c@{}}Cosine Annealing \\ Decay time (epochs)\end{tabular} & 100                             & 100                             \\ \hline
\end{tabular}
\end{table}

\subsection{Random-sign estimate variance}

Although not explicitly stated in this purpose, the use of such randomization has been reported in some earlier publications on the MNIST task \citep{scellier2017equilibrium,ernoult2020equilibrium}. However, in this work, we show that this method exhibits high variance in the training procedure.
The results presented in Table \ref{tab:results} consists of five runs. 
In the case of the EP random-sign estimate, one run among the five collapses to random guess similar to the one-sided estimate.
In order to test the frequency of such a phenomenon, we performed another five runs with both symmetric and random-sign estimates. 
The results for each run presented in Table \ref{tab:additionalRuns} show that two trials among ten are unstable, confirming further the high variance nature of the random-sign estimate.

\begin{figure}[ht!]
  \label{fig:rnd-sign}
  \centering
  \includegraphics[width=0.8\textwidth]{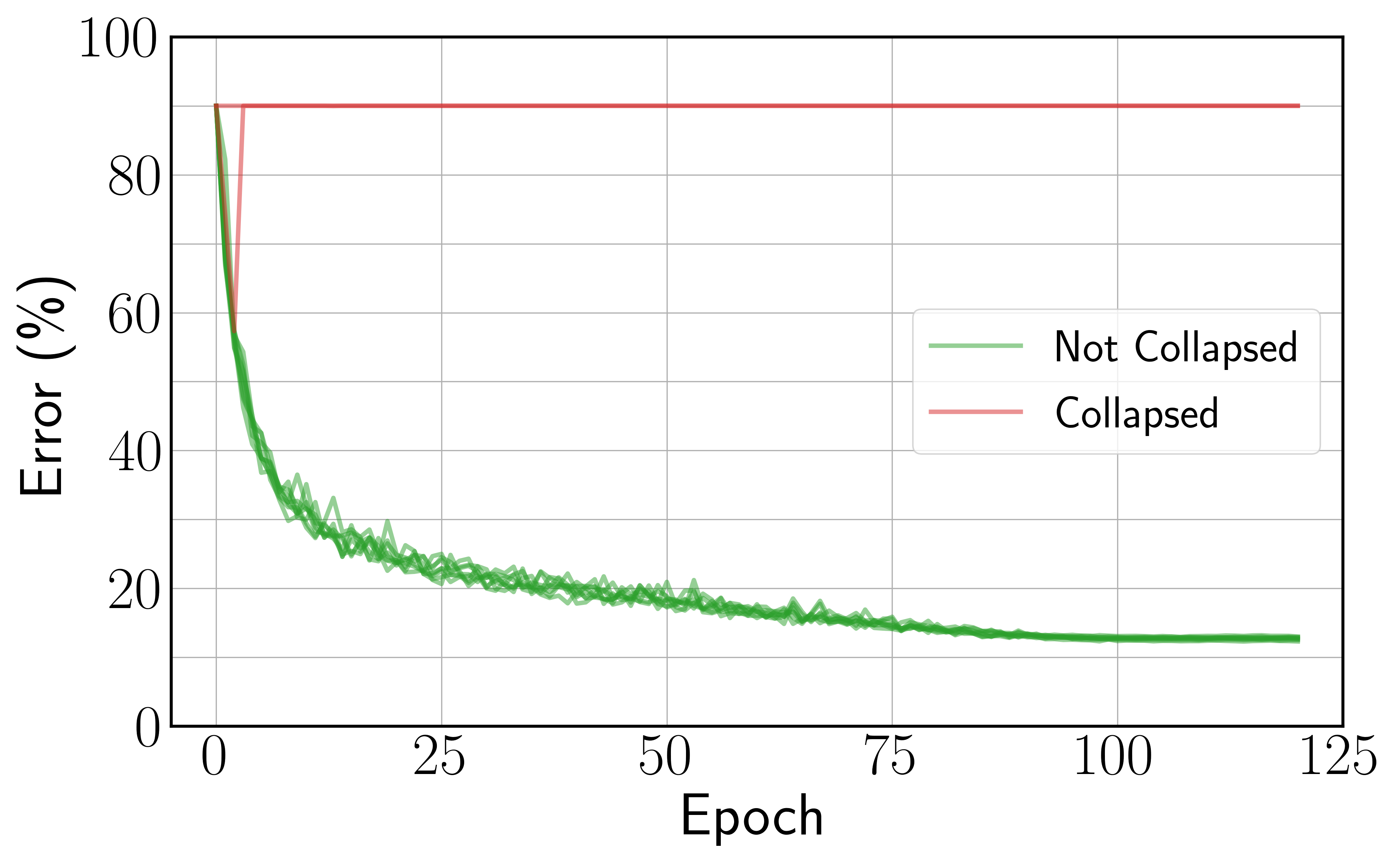}
  \caption{Test error curve of each run with the Squared Error loss function and random-sign estimate. The two collapsed runs among the ten trials are steady to 90\% because in such cases the network typically outputs the same class for each data point.}
\end{figure}

\begin{table}[ht!]
\caption{Best test error comparison between random-sign and symmetric estimates, for ten runs.}
\label{tab:additionalRuns}
\centering
\begin{tabular}{ccc}
\hline
Run index    & EP random-sign   & EP symmetric     \\ \hline
$1$          & $12.97$          & $12.24$          \\
$2$          & $12.72$          & $12.31$          \\
$3$          & $12.30$          & $12.68$          \\
$4$          & $12.45$          & $12.43$          \\
$5$          & $12.78$          & $12.57$          \\
$6$          & $12.66$          & $12.55$          \\
$7$          & $12.84$          & $12.44$          \\
$8$          & $12.59$          & $12.52$          \\
$9$          & $57.32$          & $12.85$          \\
$10$         & $89.98$          & $12.60$          \\ \hline
Mean         & $24.86$          & $\mathbf{12.52}$ \\ \hline
w/o collapse & $\mathbf{12.66}$ & N.A              \\ \hline
\end{tabular}
\end{table}

\subsection{Adding dropout}
\label{sec:dropout}
We adapt dropout \citep{srivastava2014dropout} for convergent RNNs by shutting some units to zero with probability $p<1$ when computing $\Phi(x, s_t, \theta)$. We multiply the remaining active units by the factor $\frac{1}{1-p}$ to keep the same neural activity on average, so that the learning rule is rescaled by $\left(\frac{1}{1-p}\right)^2$. 
The dropped out units are the same within one training iteration but they differ across the examples of one mini batch.
In our experiments we use $p=0.1$ on the last convolutional layer before the linear classifier.
The results are reported in Table~\ref{tab:results}.

\subsection{Changing the activation function}
\label{sec:appActivation}
Previous implementations of EP used a shifted hard sigmoid activation function:
\begin{equation}
    \sigma(x) = \max(0, \min(x, 1)).
    \label{eq:act-former}
\end{equation}
In their experiments with ConvNets on MNIST, \citet{ernoult2019updates} observed saturating units that cannot pass error signals during the second phase. 
In this work, to mitigate this effect, we have rescaled by a factor $1/2$ the slope of the activation function to ease signal propagation and prevent saturation, therefore changing Eq.~(\ref{eq:act-former}) into:
\begin{equation}
    \sigma(x) = \max\left(0, \min\left(\frac{x}{2}, 1\right)\right).
    \label{eq:act-now}
\end{equation}

\section{Weight alignment for unidirectional connections}
\label{sec:appAngle}
The angle $\alpha$ between forward and backward weights is defined as :
\begin{equation}
    \alpha = \frac{180}{\pi}{\rm Acos}\left(\frac{w^{\rm b} \bullet w^{\rm f}}{\|w^{\rm b}\| \|w^{\rm f}\|}\right), \quad \text{where} \quad \|w\| = \sqrt{w \bullet w}.
\end{equation}
Fig.~\ref{fig:angle} shows the angle between forward and backward weights during training on CIFAR-10 for both the original VF learning rule (dashed) and the new learning rule inspired by \citet{kolen1994backpropagation}. 

\begin{figure}[ht!]
  \centering
  \includegraphics[width=0.8\textwidth]{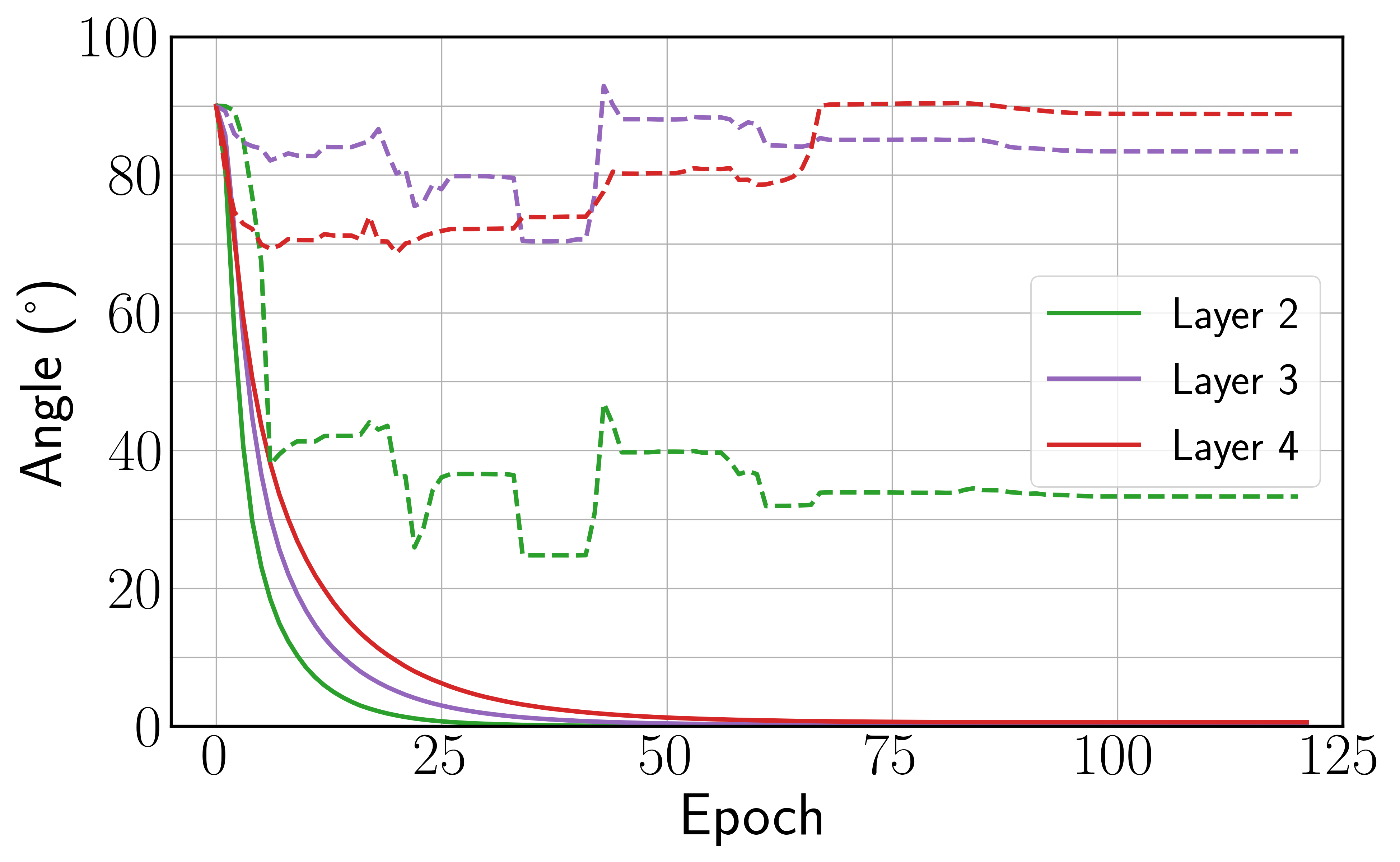}
  \caption{Angle $\alpha$ between forward and backward weights for the new estimate $\widehat{\nabla}^{\rm{KP-VF}}_{\rm sym}$ introduced (solid) and $\widehat{\nabla}^{\rm{VF}}_{\rm sym}$ (dashed).
  }
  \label{fig:angle}
\end{figure}

\section{Layer-wise comparison of EP estimates}
\label{sec:appCompEstimate}

In this section we show on Fig.~\ref{fig:morecurve} more instances of Fig.~\ref{fig:architecture}(d) for each layer of the convolutional architecture.

\begin{figure}[ht!]
  \centering
  \includegraphics[width=\textwidth]{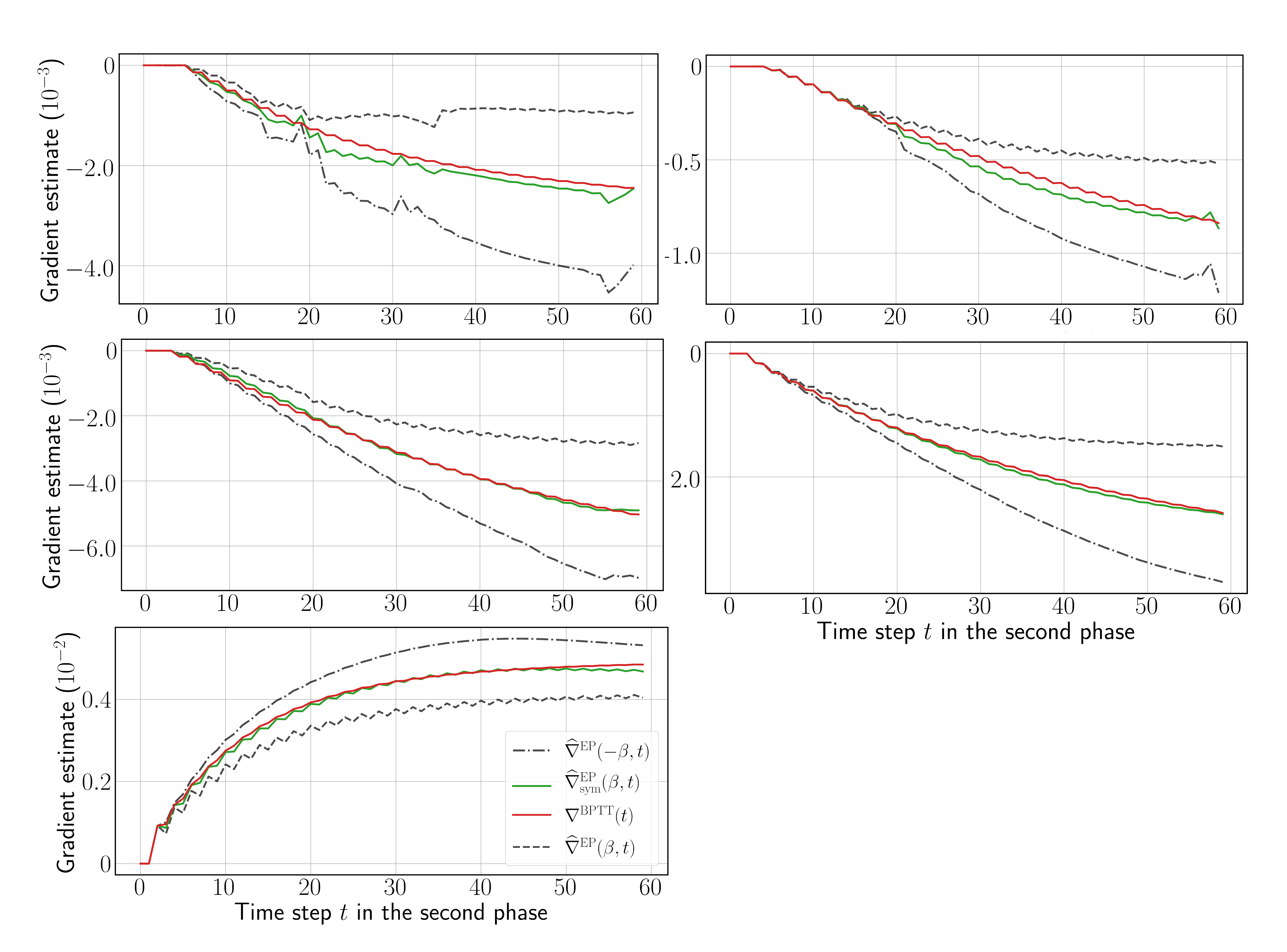}
  \caption{Layer-wise comparison between EP gradient estimates and BPTT gradients for 5 layers deep CNN on CIFAR-10 Data. Layer index increases from top to bottom, left to right, top-left being the first layer.
  }
  \label{fig:morecurve}
\end{figure}

\end{document}